\DeclareMathOperator{\argmin}{arg\,min}
\DeclareRobustCommand{\qed}{%
  \ifmmode 
  \else \leavevmode\unskip\penalty9999 \hbox{}\nobreak\hfill
  \fi
  \quad\hbox{\qedsymbol}}
\newcommand{\openbox}{\leavevmode
  \hbox to.77778em{%
  \hfil\vrule
  \vbox to.675em{\hrule width.6em\vfil\hrule}%
  \vrule\hfil}}
\newcommand{\qedsymbol}{\openbox}
\newenvironment{proof}[1][\proofname]{\par
  \normalfont
  \topsep6\p@\@plus6\p@ \trivlist
  \item[\hskip\labelsep\itshape
    #1.]\ignorespaces
}{%
  \qed\endtrivlist
}
\newcommand{\proofname}{Proof}
\edef\endfrontmatter{%
  \unexpanded\expandafter{\endfrontmatter}
  \noexpand\endNoHyper 
}
\newtheorem{theorem}{Theorem}
\newtheorem{example}{Example}
\newtheorem{assumption}{Assumption}
\newtheorem{remark}{Remark}
\newtheorem{lemma}{Lemma}
\newtheorem{definition}{Definition}
\definecolor{ngreen}{RGB}{0,160,0}
\begin{document}
\begin{frontmatter}
\title{Analysis of Off-Policy $n$-Step TD-Learning with Linear Function Approximation}
\thanks[footnoteinfo]{This paper was not presented at any IFAC 
meeting. Corresponding author : Donghwan Lee}
\author[kaist]{Han-Dong Lim}\ead{\tt\small limaries30@kaist.ac.kr},    
\author[kaist]{Donghwan Lee}\ead{\tt\small donghwan@kaist.ac.kr}  
\address[kaist]{Department of Electrical Engineering, KAIST, Daejeon, 34141, South Korea}  

\begin{keyword}                         
 Reinforcement learning, TD-learning, machine learning, convergence, finite-time analysis
\end{keyword}      

\begin{abstract}

This paper analyzes multi-step temporal difference (TD)-learning algorithms within the ``deadly triad'' scenario, characterized by linear function approximation, off-policy learning, and bootstrapping. In particular, we prove that $n$-step TD-learning algorithms converge to a solution as the sampling horizon $n$ increases sufficiently. The paper is divided into two parts. In the first part, we comprehensively examine the fundamental properties of their model-based deterministic counterparts, including projected value iteration, gradient descent algorithms, which can be viewed as prototype deterministic algorithms whose analysis plays a pivotal role in understanding and developing their model-free reinforcement learning counterparts. In particular, we prove that these algorithms converge to meaningful solutions when $n$ is sufficiently large. Based on these findings, in the second part, two $n$-step TD-learning algorithms are proposed and analyzed, which can be seen as the model-free reinforcement learning counterparts of the model-based deterministic algorithms.
\end{abstract}

\end{frontmatter}


\section{Introduction}
Reinforcement learning (RL)~\cite{sutton1998reinforcement} seeks to find an optimal sequence of decisions in unknown systems through experiences. Recent breakthroughs showcase RL algorithms surpassing human performance in various challenging tasks~\cite{mnih2015human,lillicrap2016continuous,heess2015memory,van2016deep,bellemare2017distributional,schulman2015trust,schulman2017proximal}. This success has ignited a surge of interest in RL, both theoretically and experimentally.

Among various algorithms, temporal-difference (TD) learning~\cite{sutton1988learning} stands as a cornerstone of RL, specifically for policy evaluation. Its convergence has been extensively studied over decades~\cite{tsitsiklis1997analysis}. However, a critical challenge emerges within the ``deadly triad'' scenario, characterized by linear function approximation, off-policy learning, and bootstrapping~\cite{sutton1998reinforcement,van2018deep,chen2023target}. In such scenarios, TD-learning can diverge, leading to unreliable value estimates.

Recently, gradient temporal-difference learning (GTD) has been developed and investigated in various studies~\cite{sutton2009convergent,sutton2009fast,ghiassian2020gradient,lee2023new,lim2022backstepping}. This method addresses the deadly triad issue by employing gradient-based schemes. However, the GTD family of algorithms requires somewhat restrictive assumptions about the underlying environment, which constitutes a limitation of the method. Another approach, such as emphatic method~\cite{hallak2016generalized} or adding a regularization term~\cite{bharadwaj2020convergent}, fixes the deadly triad issue but converges to a biased solution.~\cite{zhang2021breaking} also used a regularization, which results in a biased solution. Furthermore, a target network update and a projection step are required. A comprehensive overview of off policy TD-learning algorithms can be found in~\cite{dann2014policy}.

On the other hand, TD-learning is usually implemented within the context of single-step bootstrapping based on a single transition, which is known as single-step TD-learning. These methods can be extended to include multiple time steps, a class of algorithms known as multi-step TD learning, to enhance performance. Recently, multi-step approaches~\cite{sutton1998reinforcement,tsitsiklis1997analysis,chen2021finite,mahmood2017multi,de2018multi,precup2001off,maei2010toward,van2016effective,mandal2023n,schulman2015high,carvalho2023multi}, including $n$-step TD-learning and TD($\lambda$), have become integral to the success of modern deep RL agents, significantly improving performance~\cite{schulman2015high,yuan2019novel,hessel2018rainbow,hernandez2019understanding} in various scenarios. Despite these empirical successes and the growing body of analysis on multi-step RL, to the best of the author's knowledge, the effects and theoretical underpinnings of $n$-step TD-learning have yet to be fully explored.

Motivated by the aforementioned discussions, this paper conducts an in-depth examination of the theoretical foundations necessary to understand the core principles of $n$-step TD-learning methods and their model-based counterparts, which can be viewed as prototype deterministic algorithms whose analysis plays a pivotal role in understanding and developing their model-free RL counterparts. First, we investigate the convergence conditions for $n$-step projected value iteration and present an algorithm for solving the projected $n$-step Bellman equation. We show that the projected Bellman operator becomes a contraction mapping for sufficiently large $n$, ensuring the convergence of the corresponding algorithms. We also establish a relationship between this convergence and the singularity of the matrix governing the $n$-step TD method. Next, we demonstrate that $n$-step TD methods effectively mitigate the challenges of the deadly triad when the sampling horizon, $n$, is sufficiently large. Our thorough analysis of the conditions on $n$ offering valuable insights, and we provide an interesting example why sharpening the bound might be difficult. Overall, we present the relationship between the choice of $n$ for the convergence of $n$-step projected value iteration, the singularity of the matrix, and the stability of the $n$-step TD method.


 Lastly, following the spirit of~\cite{borkar2000ode}, we prove the asymptotic convergence of $n$-step TD-learning method under both the i.i.d. and Markov observation models. The asymptotic convergence relies on the theoretical properties derived based on its model-based counterparts. We investigate the ODE counterpart of the $n$-step TD-learning, which inherits the properties of the model-based deterministic counterparts.

\section{Preliminaries}

\subsection{Notation}
The adopted notation is as follows: ${\mathbb R}$: set of real numbers; $\mathbb{R}_+$ : set of positive real numbers; ${\mathbb R}^n $: $n$-dimensional Euclidean
space; ${\mathbb R}^{n \times m}$: set of all $n \times m$ real
matrices; $A^{\top}$: transpose of matrix $A$; $A \succ 0$ ($A \prec
0$, $A\succeq 0$, and $A\preceq 0$, respectively): symmetric
positive definite (negative definite, positive semi-definite, and
negative semi-definite, respectively) matrix $A$; $I$: identity matrix with appropriate dimensions; $\lambda_{\min}(A)$ and $\lambda_{\max}(A)$ for any matrix $A$: the minimum and maximum eigenvalues of $A$; $|\mathcal{S}|$: cardinality of a finite set $\mathcal{S}$; $\left\|\cdot \right\|_{\infty}$ : infinity norm of a matrix or vector;

\subsection{Markov decision process}
A Markov decision process (MDP) is characterized by a quintuple ${\mathcal M}: =
(\mathcal{S},\mathcal{A},P,r,\gamma)$, where $\mathcal{S}$ is a finite
state-space, $\mathcal{A}$ is a finite action
space, $P(s'|s,a)$ represents the (unknown)
state transition probability from state $s$ to $s'$ given action
$a$, $r:{\mathcal S}\times {\mathcal A}\times {\mathcal S}\to
\Rset$ is the reward
function, and $\gamma \in (0,1)$ is the discount factor. In particular, if action
$a$ is selected with the current state $s$, then the state
transits to $s'$ with probability $P(s'|s,a)$ and incurs a
reward $r(s,a,s')$.  For convenience, we consider a deterministic reward function and simply write $r(s_k,a_k ,s_{k + 1}) =:r_{k+1},k \in \{ 0,1,\ldots \}$. As long as the reward function is bounded, we can assume that the reward function follows a probability distribution depending on $(s,a,s^{\prime})$.

The stochastic policy represents a probability distribution over the action space. Consider a policy $\pi:{\mathcal S} \times
{\mathcal A}\to [0,1]$ representing the probability, $\pi(a|s)$, of selecting action $a$ at the current state $s$, $P^\pi$ denotes the state transition probability matrix under policy $\pi$, and $d^{\pi}:{\mathcal S} \to {\mathbb R}$ denotes the stationary probability distribution of the state $s\in {\mathcal S}$ under $\pi$. We also define
$R^\pi(s)$ as the expected reward given the policy $\pi$ and the current state $s$. The infinite-horizon discounted value function with policy $\pi$ is $v^{\pi}(s):={\mathbb E} \left[ \left. \sum_{k = 0}^\infty {\gamma
^k r(s_k,a_k,s_{k+1})} \right|s_0=s \right]$, where ${\mathbb E}$ stands for the expectation taken with respect to the state-action trajectories under $\pi$. Given pre-selected basis (or feature) functions $\phi_1,\ldots,\phi_m:{\mathcal S}\to {\mathbb R}$, the matrix, $\Phi \in {\mathbb R}^{|{\mathcal S}| \times m}$, called the feature matrix, is defined as a matrix whose $s$-th row vector is $\phi(s):=\begin{bmatrix} \phi_1(s) &\cdots & \phi_m(s) \end{bmatrix}$. Throughout the paper, we assume that $\Phi \in {\mathbb R}^{|{\mathcal S}| \times m}$ is a full column rank matrix, which can be guaranteed by using Gaussian basis or Fourier feature functions. The policy evaluation problem is the problem of estimating $v^{\pi}$ given a policy $\pi$. In this paper, we will denote $V^{\pi}\in\mathbb{R}^{|\mathcal{S}|}$ to be a vector representation of the value function, i.e., the $s$-th element of $V^{\pi}$ corresponds to $v^{\pi}(s)$.

\begin{definition}[Policy evaluation problem]\label{def:1}
In this paper, the policy evaluation problem is defined as finding the least-square solution
\begin{align*}
\theta _*^\infty  = \arg {\min _{\theta  \in {\mathbb R}^m}}f(\theta ),\quad f(\theta ) = \frac{1}{2}\left\| {{V^\pi } - \Phi \theta } \right\|_{{D^\beta }}^2
\end{align*}
where
$
{V^\pi }: = \sum\limits_{k = 0}^\infty  {{\gamma ^k}{{({P^\pi })}^k}{R^\pi }}
$
is the true value function, $R^\pi \in {\mathbb R}^{|{\mathcal S}|}$ is a vector enumerating all $R^\pi(s), s\in {\mathcal S}$, $D^{\beta}$ is a diagonal matrix with positive diagonal elements $d^{\beta}(s),s\in {\mathcal S}$, and $\|x\|_D:=\sqrt{x^T Dx}$ for any positive-definite $D$. Here, $d^{\beta}$ can be any state visit distribution under the behavior policy $\beta$ such that $d^{\beta}(s)>0,\forall s\in {\mathcal S}$. The solution can be written as
\begin{align}
\Phi \theta _*^\infty  = \Pi {V^\pi }.\label{eq:optimal-solution}
\end{align}
 $\Pi$ is the projection onto the range space of $\Phi$, denoted by $\mathcal{R}(\Phi)$: $\Pi(x):=\argmin_{x'\in \mathcal{R}(\Phi)}
\|x-x'\|_{D^{\beta}}^2$. The projection can be performed by the matrix
multiplication: we write $\Pi(x):=\Pi x$, where $\Pi:=\Phi(\Phi^T
D^{\beta} \Phi)^{-1}\Phi^T D^{\beta} \in \mathbb{R}^{|\mathcal{S}|\times |\mathcal{S}|}$.

\end{definition}

\subsection{Review of GTD algorithm}\label{sec:gradient1}
In this section, we briefly review the gradient temporal difference (GTD) learning developed in~\cite{sutton2009convergent}, which tries to solve the policy evaluation problem. Roughly speaking, the goal of the policy evaluation is to find the weight vector $\theta$ such that $\Phi \theta$ approximates the true value function $V^{\pi}$. This is typically done by minimizing the so-called {\em mean-square projected Bellman error} loss function~\cite{sutton2009convergent,sutton2009fast}
\begin{align}
&\min_{\theta\in {\mathbb R}^q} {\rm MSPBE}(\theta):= \frac{1}{2}\|
\Pi (R^{\pi} + \gamma P^{\pi} \Phi \theta)-\Phi \theta \|_{D^{\beta}}^2.\label{eq:4}
\end{align}
Note that minimizing the objective means minimizing the error of the projected Bellman equation (PBE) $\Phi \theta  = \Pi (R^\pi   + \gamma P^\pi  \Phi \theta )$ with respect to $ \|\cdot\|_{D^\beta}$. Moreover, note that in the objective of~\eqref{eq:4}, $d^{\beta}$ depends on the behavior policy, $\beta$, while $P^{\pi}$ and $R^{\pi}$ depend on the target policy, $\pi$, that we want to evaluate. This structure allows us to obtain an off-policy learning algorithm through the importance sampling~\cite{precup2001off} or sub-sampling techniques~\cite{sutton2009convergent}.

A common assumption in proving the convergence of GTD~\cite{sutton2009convergent,sutton2009fast,ghiassian2020gradient,lee2023new} is the following assumption:
\begin{assumption}\label{assumption:2}
$\Phi ^{\top} D^{\beta} (\gamma P^\pi   - I)\Phi $ is nonsingular.
\end{assumption}
Please note that \cref{assumption:2} always holds when $\beta = \pi$, while it is in general not true. It will be clear in further section how this assumption can be relaxed using $n$-step methods. A sufficiently large choice of $n$ can relax this assumption. Moreover, the value of $n$ is chosen to be finite, which clearly differs with the Monte-Carlo setting where $n=\infty$.

Some properties related to~\eqref{eq:4} are summarized below for convenience and completeness.
\begin{lemma}[Lemma 3 in~\cite{lee2023new}]\label{lemma:2}
Under Assumption~\ref{assumption:2}, the following statements hold true:
\begin{enumerate}
\item A solution of~\eqref{eq:4} exists, and is unique.

\item The solution of~\eqref{eq:4} is given by
\begin{align}
\theta ^*:=-(\Phi ^{\top} D^{\beta} (\gamma P^\pi   - I)\Phi )^{ - 1} \Phi ^{\top} D^{\beta} R^\pi.\label{eq:theta-star}
\end{align}
\end{enumerate}
\end{lemma}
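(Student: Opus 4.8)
The plan is to reduce the MSPBE objective in~\eqref{eq:4} to an explicit quadratic form in $\theta$, whose Hessian I will show is positive definite exactly under \cref{assumption:2}; strict convexity then yields both claims at once.

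First I would simplify the argument of the projection. Since $\Phi\theta\in\mathcal{R}(\Phi)$, idempotence of $\Pi$ gives $\Pi\Phi\theta=\Phi\theta$, so that
\begin{align*}
\Pi(R^\pi+\gamma P^\pi\Phi\theta)-\Phi\theta=\Pi\left(R^\pi+(\gamma P^\pi-I)\Phi\theta\right).
\end{align*}
Writing $M:=\Phi^\top D^\beta\Phi$, which is invertible because $\Phi$ has full column rank and $D^\beta\succ 0$, and recalling $\Pi=\Phi M^{-1}\Phi^\top D^\beta$, the central computation is the identity
\begin{align*}
\|\Pi y\|_{D^\beta}^2=(\Phi^\top D^\beta y)^\top M^{-1}(\Phi^\top D^\beta y),
\end{align*}
which follows by expanding $(\Pi y)^\top D^\beta(\Pi y)$ and using the cancellation $\Phi^\top D^\beta\Phi=M$. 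This is the step I expect to require the most care, since it is where the self-adjointness and idempotence of the $D^\beta$-orthogonal projection are exploited.

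Next I would set $A:=\Phi^\top D^\beta(\gamma P^\pi-I)\Phi$ and $b:=\Phi^\top D^\beta R^\pi$, so that for $y=R^\pi+(\gamma P^\pi-I)\Phi\theta$ we have $\Phi^\top D^\beta y=A\theta+b$, and therefore
\begin{align*}
{\rm MSPBE}(\theta)=\frac{1}{2}(A\theta+b)^\top M^{-1}(A\theta+b).
\end{align*}
The Hessian of this objective is $A^\top M^{-1}A$. Since $M^{-1}\succ 0$ and $A$ is nonsingular by \cref{assumption:2}, we get $A^\top M^{-1}A\succ 0$, so the objective is a strictly convex, coercive quadratic. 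This establishes existence and uniqueness of the minimizer, proving the first statement.

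Finally, the closed form follows from the first-order optimality condition. Setting the gradient $A^\top M^{-1}(A\theta+b)=0$ and using that $A^\top M^{-1}$ is nonsingular, I obtain $A\theta+b=0$, hence $\theta^*=-A^{-1}b=-(\Phi^\top D^\beta(\gamma P^\pi-I)\Phi)^{-1}\Phi^\top D^\beta R^\pi$, which is precisely~\eqref{eq:theta-star}. The only place \cref{assumption:2} is genuinely needed is in guaranteeing invertibility of $A$ for both the positive-definiteness of the Hessian and the inversion in the closed-form solution.
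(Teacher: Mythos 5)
Your proof is correct: the reduction of the MSPBE to the quadratic form $\frac{1}{2}(A\theta+b)^\top M^{-1}(A\theta+b)$ via the identity $\|\Pi y\|_{D^\beta}^2=(\Phi^\top D^\beta y)^\top M^{-1}(\Phi^\top D^\beta y)$ is exactly the standard argument behind the cited Lemma~3 of~\cite{lee2023new} (the paper itself gives no in-text proof), and your use of \cref{assumption:2} for positive definiteness of the Hessian $A^\top M^{-1}A$ and for inverting $A$ in the first-order condition is precisely where the assumption enters there as well. One small observation worth noting: since $A$ is nonsingular, the minimum value of the objective is zero, i.e., $\theta^*$ in~\eqref{eq:theta-star} solves the projected Bellman equation exactly, consistent with how the paper uses this lemma.
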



\section{Multi-step projected Bellman operator}\label{sec:dynamic-programming}

Let us consider the $n$-step Bellman operator~\cite{sutton1988learning}
\begin{align*}
{T^n}(x):=& (I+ \gamma {P^\pi } +  \cdots  + {\gamma ^{n - 1}}{({P^\pi })^{n - 1}}){R^\pi } + {\gamma ^n}{({P^\pi })^n}x.
\end{align*}
Then, the corresponding projected $n$-step Bellman operator ($n$-PBO) is given by $\Pi {T^n}$. Based on this, the corresponding $n$-step projected value iteration ($n$-PVI) is given by
\begin{align}
\Phi {\theta _{k + 1}} = \Pi {T^n}(\Phi {\theta _k}),\quad k \in \{ 0,1, \ldots \} ,\quad {\theta _0} \in {\mathbb R}^m\label{eq:projected-VI}
\end{align}

Note that at each iteration $k$, $\theta _{k + 1}$ can be uniquely determined given $\theta _k$ because $\Pi {T^n}(\Phi {\theta _k})$ belongs to the image of $\Phi$, and the unique solution solves $\Phi \theta  = \Pi {T^n}(\Phi {\theta _k}),$ and is given by
\begin{align}
{\theta _{k + 1}} = {({\Phi ^{\top}}D^\beta\Phi )^{ - 1}}{\Phi ^{\top}}D^\beta {T^n}(\Phi {\theta _k}).\label{eq:projected-VI2}
\end{align}
Moreover, it is important to note that $\Pi \in \mathbb{R}^{|\mathcal{S}|\times |\mathcal{S}|}$ is a projection onto the column space of the feature matrix $\Phi$ with respect to the weighted norm ${\left\|  \cdot  \right\|_{{D^\beta }}}$, and satisfies the nonexpansive mapping property ${\left\| {\Pi x - \Pi y} \right\|_{{D^\beta }}} \le {\left\| {x - y} \right\|_{{D^\beta }}}$ with respect to ${\left\|  \cdot  \right\|_{{D^\beta }}}$.
On the other hand, for the Bellman operator $T^n$, we can consider the two cases depending on the behavior policy and target policy:
\begin{enumerate}
\item on-policy case: the behavior policy and target policy are identical, i.e., $\beta = \pi$,
\item off-policy case : the behavior policy and target policy are different, i.e., $\beta \neq \pi$.
\end{enumerate}

In the on-policy case $\beta = \pi$, it can be easily proved that $T^n$ is a contraction mapping with respect to the norm ${\left\|  \cdot  \right\|_{{D^\beta }}}$ with the contraction factor $\gamma^n$.
\begin{lemma}
If $\beta=\pi$, the mapping $T^n$ satisfies
\begin{align*}
{\left\| {{T^n}(x) - {T^n}(y)} \right\|_{{D^\pi }}} \le {\gamma ^n}{\left\| {x - y} \right\|_{{D^\pi }}},\quad \forall x,y \in {\mathbb R}^{|S|}.
\end{align*}
\end{lemma}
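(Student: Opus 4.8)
The plan is to exploit the affine structure of $T^n$ so that the entire reward-dependent term drops out of the difference, reducing the claim to a contraction estimate for the powers of $P^\pi$. Since the inhomogeneous part $(I + \gamma P^\pi + \cdots + \gamma^{n-1}(P^\pi)^{n-1})R^\pi$ does not depend on the argument, I would first observe that
\begin{align*}
T^n(x) - T^n(y) = \gamma^n (P^\pi)^n (x - y),
\end{align*}
so that the constant term cancels. It then suffices to show $\left\| (P^\pi)^n z \right\|_{D^\pi} \le \left\| z \right\|_{D^\pi}$ for every $z \in \mathbb{R}^{|S|}$; multiplying through by $\gamma^n$ and specializing to $z = x - y$ yields the stated bound.

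The core step, which I expect to be the main obstacle, is proving the single-step nonexpansiveness $\left\| P^\pi z \right\|_{D^\pi} \le \left\| z \right\|_{D^\pi}$ under the $d^\pi$-weighted norm. Writing $d^\pi(s)$ for the diagonal entries of $D^\pi$ and expanding the squared norm, I would apply Jensen's inequality to each row $P^\pi(s,\cdot)$, which is a probability distribution, to obtain
\begin{align*}
\left\| P^\pi z \right\|_{D^\pi}^2 = \sum_s d^\pi(s)\Bigl(\sum_{s'} P^\pi(s,s') z(s')\Bigr)^2 \le \sum_s d^\pi(s) \sum_{s'} P^\pi(s,s') z(s')^2.
\end{align*}
Interchanging the order of summation and invoking stationarity of $d^\pi$ under $\pi$, namely $\sum_s d^\pi(s) P^\pi(s,s') = d^\pi(s')$ (equivalently $(d^\pi)^\top P^\pi = (d^\pi)^\top$), collapses the right-hand side to $\sum_{s'} d^\pi(s') z(s')^2 = \left\| z \right\|_{D^\pi}^2$, which gives the desired inequality.

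Finally, I would iterate this single-step estimate $n$ times to conclude $\left\| (P^\pi)^n z \right\|_{D^\pi} \le \left\| z \right\|_{D^\pi}$, and combine it with the first reduction. The one subtlety worth flagging is that the entire argument hinges on $D^\pi$ being built from the stationary distribution of the \emph{same} policy that generates $P^\pi$; this is precisely why the clean contraction factor $\gamma^n$ is available in the on-policy case $\beta = \pi$, and why it should be expected to fail in the general off-policy setting, where $d^\beta$ is not stationary for $P^\pi$ and the Jensen-plus-stationarity cancellation no longer goes through.
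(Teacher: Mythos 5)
Your proof is correct and follows exactly the argument the paper invokes: the paper omits the details but points to Lemma~4 of Tsitsiklis and Van Roy, whose content is precisely your Jensen-plus-stationarity nonexpansiveness bound $\left\| P^\pi z \right\|_{D^\pi} \le \left\| z \right\|_{D^\pi}$, iterated $n$ times after the affine reward term cancels. You have simply written out in full what the paper cites, so there is no substantive difference in approach.
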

\begin{proof}
The proof can be easily done by following the main ideas of~\cite[Lemma~4]{tsitsiklis1997analysis}, and omitted here for brevity.
\end{proof}

Therefore, $n$-PBO$, \Pi {T^n}$, is also a contraction with the factor $\gamma^n$.
\begin{lemma}
If $\beta=\pi$, the mapping $\Pi T^n$ satisfies
\begin{align*}
{\left\| {{\Pi T^n}(x) - {\Pi T^n}(y)} \right\|_{{D^\pi }}} \le {\gamma ^n}{\left\| {x - y} \right\|_{{D^\pi }}},\quad \forall x,y \in {\mathbb R}^{|S|}.
\end{align*}
\end{lemma}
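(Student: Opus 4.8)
The plan is to combine the two ingredients already in place rather than to redo any estimation from scratch. Since we are in the on-policy case $\beta=\pi$, the relevant weight matrix is $D^\beta=D^\pi$, so both the nonexpansiveness of the projection and the contraction of $T^n$ are stated in exactly the norm $\|\cdot\|_{D^\pi}$ that appears in the claim. The preceding lemma gives ${\left\| {T^n(x)-T^n(y)} \right\|_{D^\pi}}\le \gamma^n{\left\| {x-y} \right\|_{D^\pi}}$, and the paragraph introducing $n$-PVI records that $\Pi$ is a projection onto $\mathcal{R}(\Phi)$ with respect to $\|\cdot\|_{D^\pi}$ satisfying ${\left\| {\Pi x-\Pi y} \right\|_{D^\pi}}\le{\left\| {x-y} \right\|_{D^\pi}}$.

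First I would apply the nonexpansiveness of $\Pi$ to the points $T^n(x)$ and $T^n(y)$, which yields ${\left\| {\Pi T^n(x)-\Pi T^n(y)} \right\|_{D^\pi}}\le{\left\| {T^n(x)-T^n(y)} \right\|_{D^\pi}}$. Then I would chain this with the contraction bound on $T^n$ from the previous lemma to obtain ${\left\| {\Pi T^n(x)-\Pi T^n(y)} \right\|_{D^\pi}}\le \gamma^n{\left\| {x-y} \right\|_{D^\pi}}$, which is precisely the assertion. Because $\Pi$ is linear, $\Pi T^n(x)-\Pi T^n(y)=\Pi\big(T^n(x)-T^n(y)\big)$, so the first inequality is just the nonexpansiveness applied to a single vector difference; no further manipulation is needed.

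There is essentially no obstacle here: the statement is a one-line composition of a nonexpansive map with a $\gamma^n$-contraction, and all the analytic work was absorbed into the preceding lemma (which in turn rests on the Tsitsiklis--Van Roy argument cited there). The only points worth stating explicitly, to keep the argument self-contained, are that $D^\beta=D^\pi$ under $\beta=\pi$ so that the norms match, and that the nonexpansiveness used is the one measured in $\|\cdot\|_{D^\pi}$ rather than in some other norm. I would therefore keep the proof to two short displayed inequalities and a sentence noting these identifications.
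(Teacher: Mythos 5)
Your proof is correct and is exactly the argument the paper intends: the lemma is stated as an immediate consequence of composing the nonexpansiveness of $\Pi$ with respect to $\|\cdot\|_{D^\beta}$ (recorded just before the $n$-PVI discussion) with the $\gamma^n$-contraction of $T^n$ from the preceding lemma, using $D^\beta = D^\pi$ in the on-policy case. Your explicit note that $\Pi T^n(x)-\Pi T^n(y)=\Pi\bigl(T^n(x)-T^n(y)\bigr)$ by linearity is a nice touch but changes nothing substantive.
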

The above result implies that $\Pi T^n$ is a contraction.
In conclusion, by Banach fixed point theorem, $n$-PVI in~\eqref{eq:projected-VI} converges to its unique fixed point because $n$-PBO $\Pi {T^n}$ is a contraction with respect to $\left\|  \cdot  \right\|_{D^\beta }$.

 On the other hand, in the off-policy case $\beta \neq \pi$, $T^n$ is no more a contraction mapping with respect to ${\left\|  \cdot  \right\|_{{D^\beta }}}$, and so is $\Pi T^n$. Therefore, $n$-PVI in~\eqref{eq:projected-VI} may not converge in some cases.
However, in this paper, we will prove that for a sufficiently large $n$, $\Pi T^n$ becomes a contraction with respect to some norm. To proceed further, some necessary notions are defined.
\begin{definition}\label{def:nPBE}
A solution of the $n$-PBE, $\theta_*^n$, if exists, is defined as a vector satisfying
\begin{align}
\Phi \theta_*^n  = \Pi {T^n}(\Phi {\theta_*^n}).\label{eq:1}
\end{align}
\end{definition}
Using the optimal Bellman equation $\Phi \theta_*^n  = \Pi {T^n}(\Phi {\theta_*^n})$,~\eqref{eq:projected-VI2} can be rewritten by
\begin{align}
{\theta _{k + 1}} - \theta _*^n = {{({\Phi ^{\top}}D^{\beta}\Phi )}^{ - 1}}{\Phi ^{\top}}D^{\beta}{\gamma ^n}{{({P^\pi })}^n}\Phi ({\theta _k} - \theta _*^n),\label{eq:projected-VI3}
\end{align}
which is a discrete-time linear time-invariant system~\cite{chen1995linear}. This implies that the convergence of~\eqref{eq:projected-VI3} is fully characterized by the Schur stability of the matrix 
\begin{align}
    A :=- (\Phi ^{\top}D^{\beta}\Phi )^{ - 1}\Phi ^{\top}D^{\beta}\gamma^n{({P^\pi })}^n \label{def:A}
\end{align}
 Moreover, one can conjecture that the existence and uniqueness of the solution to $n$-PBE in~\eqref{def:nPBE} is also related to the Schur stability of $A$ as well. Indeed, one can prove that the Schur stability and contraction property of $n$-PBO $\Pi {T^n}$ are equivalent. 
\begin{theorem}\label{thm:Schur-eq}
The matrix $A$ defined in~(\ref{def:A}) is Schur if and only if $\Pi T^n$ is a contraction.
\end{theorem}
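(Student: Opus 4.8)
The plan is to collapse both directions of the equivalence onto a single spectral statement, $\rho\bigl(\gamma^n \Pi (P^\pi)^n\bigr) < 1$. First I would record that $\Pi T^n$ is affine: setting $L := \gamma^n \Pi (P^\pi)^n$, the constant part of $\Pi T^n$ drops out of differences, so $\Pi T^n(x) - \Pi T^n(y) = L(x-y)$ for all $x,y \in \mathbb{R}^{|\mathcal{S}|}$. Consequently $\Pi T^n$ is a contraction with respect to some norm if and only if its linear part $L$ is, i.e.\ if and only if there is a norm on $\mathbb{R}^{|\mathcal{S}|}$ whose induced operator norm satisfies $\|L\| < 1$. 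At this point I would invoke the standard equivalence from matrix analysis: a square matrix admits a vector norm in which its induced operator norm is strictly below $1$ if and only if its spectral radius is strictly below $1$. The direction $\|L\| < 1 \Rightarrow \rho(L) \le \|L\| < 1$ is immediate, and the converse rests on the classical fact that for every $\varepsilon > 0$ there is a norm with $\|L\| \le \rho(L) + \varepsilon$. Hence the right-hand side of the theorem is equivalent to $\rho(L) < 1$.

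Next I would connect $\rho(L)$ to the matrix $A$. Writing $\Pi = \Phi (\Phi^\top D^\beta \Phi)^{-1} \Phi^\top D^\beta$ and letting $B := (\Phi^\top D^\beta \Phi)^{-1} \Phi^\top D^\beta \gamma^n (P^\pi)^n \Phi$ be the $m \times m$ matrix driving the error recursion~\eqref{eq:projected-VI3}, a direct computation yields the two identities $L \Phi = \Phi B$ and $\Phi A = -L$ (equivalently $A\Phi = -B$). The first shows that $\mathcal{R}(\Phi)$ is $L$-invariant and that $L$ acts on it, in the coordinates $\theta \mapsto \Phi \theta$, exactly as $B$; the inclusion $\mathcal{R}(L) \subseteq \mathcal{R}(\Pi) = \mathcal{R}(\Phi)$ shows that $L$ carries any complement of $\mathcal{R}(\Phi)$ back into $\mathcal{R}(\Phi)$. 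Choosing a basis adapted to $\mathbb{R}^{|\mathcal{S}|} = \mathcal{R}(\Phi) \oplus W$ therefore puts $L$ in block upper-triangular form with diagonal blocks $B$ and $0$, so its eigenvalues are those of $B$ together with $|\mathcal{S}| - m$ zeros, giving $\rho(L) = \rho(B)$. Conceptually this is just the statement that $\Phi A$ and $A\Phi$ share the same nonzero eigenvalues. Since Schur stability depends only on the moduli of eigenvalues and is insensitive to the overall sign relating $A\Phi$ and $B$, the Schur stability of $A$ — that is, of the recursion~\eqref{eq:projected-VI3} — is exactly the condition $\rho(B) < 1$.

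Chaining the two reductions then closes the argument: $A$ is Schur $\iff \rho(B) < 1 \iff \rho(L) < 1 \iff \Pi T^n$ is a contraction. The step I expect to be the main obstacle is the reverse half of the norm equivalence, namely producing an honest norm in which $\Pi T^n$ genuinely contracts once only $\rho(L) < 1$ is known; this is precisely where one must appeal to (or reprove) the $\rho(L)+\varepsilon$ norm construction rather than argue with a fixed weighted norm such as $\|\cdot\|_{D^\beta}$, for which the claimed equivalence would fail. The remaining work — the affine reduction and the block-triangular eigenvalue count — is routine once the dimensions of $A$, $L$, and $B$ are kept straight.
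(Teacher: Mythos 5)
Your proof is correct, and it takes a genuinely different route from the paper's. The paper argues \emph{dynamically}: it identifies $n$-PVI with the linear recursion~\eqref{eq:projected-VI3}, invokes standard linear-system theory ($A$ Schur iff the recursion converges), and then closes the loop with \cref{lemma:Schur-eq2}, which shows that an affine iteration converges to a unique fixed point iff the map is a contraction; the norm witnessing the contraction is produced in \cref{lemma:Schur-eq} via a Lyapunov inequality $B^\top P B \preceq \alpha P$. You instead argue \emph{statically and spectrally}: contraction of the affine map $\Pi T^n$ reduces to $\rho(L)<1$ for its linear part $L=\gamma^n\Pi(P^\pi)^n$ (using the classical $\|L\|\le\rho(L)+\varepsilon$ norm construction --- the same ingredient the paper packages as its Lyapunov lemma), and then $\rho(L)=\rho(B)$ follows from the intertwining $L\Phi=\Phi B$ together with $\mathcal{R}(L)\subseteq\mathcal{R}(\Phi)$, equivalently from the fact that $\Phi A$ and $A\Phi$ share nonzero spectrum. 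Your route buys two things the paper's proof leaves implicit: it makes explicit the dimension bridge between the $|\mathcal{S}|\times|\mathcal{S}|$ linear part of $\Pi T^n$ and the $m\times m$ recursion matrix, which the paper compresses into the unargued assertion that convergence of $n$-PVI is ``equivalent'' to that of~\eqref{eq:projected-VI3}; and it avoids any fixed-point existence discussion, which in the paper is slightly delicate since~\eqref{eq:projected-VI3} is written relative to $\theta_*^n$ before its existence is settled (your chain never mentions fixed points, as contraction is a Lipschitz statement). You also correctly read past the two typos in~\eqref{def:A} (the missing trailing $\Phi$ and the sign, neither of which affects Schur stability since $\rho(-B)=\rho(B)$). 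What the paper's route buys in exchange is a reusable pair of lemmas (\cref{lemma:Schur-eq} and \cref{lemma:Schur-eq2}) invoked again elsewhere, and a reading of the equivalence directly in terms of algorithmic convergence of $n$-PVI. Finally, the obstacle you flag --- that a fixed weighted norm such as $\|\cdot\|_{D^\beta}$ cannot serve and one must build the $\rho(L)+\varepsilon$ norm --- is exactly right and consistent with the paper's own usage, where ``contraction'' means contraction in \emph{some} norm (cf.\ Example~1, which equates failure of contraction with $\rho(\gamma\Pi(P^\pi)^n)>1$).
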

\begin{proof}
Noting that $n$-PVI is equivalently written by~\eqref{eq:projected-VI3}, one can easily prove that the convergence of $n$-PVI is equivalent to that of the linear system in~\eqref{eq:projected-VI3}. Moreover, from the standard linear system theory,~\eqref{eq:projected-VI3} converges to a unique point if and only if $A$ is Schur. Then, since $\Pi T^n$ is an affine mapping, one arrives at the desired conclusion using~\cref{lemma:Schur-eq2}. 
\end{proof}

\begin{remark}
~\cref{thm:Schur-eq} implies the equivalence between the matrix $A$ being Schur and $\Pi T^n$ being a contraction.~\cref{lemma:Schur-eq2} ensures the equivalence of $\Pi T^n$ being a contraction and convergence of $n$-PVI. Therefore, we can conclude that $A$ is Schur if and only if $n$-PVI converges.    
\end{remark}

In the next theorem, we establish a connection between the contraction property of $\Pi {T^n}$ and the nonsingularity of ${\Phi ^T}{D^\beta }(I - {\gamma ^n}{({P^\pi })^n})\Phi$, which plays an important role throughout the paper.
\begin{lemma}[Corollary 5.6.16 in~\cite{horn2012matrix}]\label{lemma:7}
If $M \in {\mathbb R}^{n\times n}$ satisfies $\|M \|<1$ for some matrix norm $\|\cdot \|$, then $I-M$ is nonsingular, and
\[\left\| (1 - M)^{ - 1} \right\| \le \frac{1}{1 - \left\| M \right\|}\]
\end{lemma}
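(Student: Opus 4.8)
The plan is to prove the result constructively through the Neumann series $\sum_{k=0}^\infty M^k$, which simultaneously yields nonsingularity of $I-M$, an explicit formula for its inverse, and the claimed norm bound. Throughout I would use that a matrix norm is by definition submultiplicative, so that $\|M^k\| \le \|M\|^k$, and that ${\mathbb R}^{n\times n}$ equipped with any such norm is a finite-dimensional, hence complete, normed space.

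First I would establish convergence of the partial sums $S_N := \sum_{k=0}^N M^k$. For $N' > N$ one has $\|S_{N'} - S_N\| \le \sum_{k=N+1}^{N'} \|M\|^k$, and since $\|M\|<1$ the geometric tail $\sum_{k=N+1}^\infty \|M\|^k$ tends to $0$ as $N \to \infty$. Hence $(S_N)$ is a Cauchy sequence and converges to some $S \in {\mathbb R}^{n\times n}$.

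Next I would identify $S$ with $(I-M)^{-1}$. The telescoping identity $(I-M)S_N = I - M^{N+1}$, combined with $\|M^{N+1}\| \le \|M\|^{N+1} \to 0$, gives $(I-M)S = I$ upon letting $N \to \infty$; the symmetric computation $S_N (I-M) = I - M^{N+1}$ gives $S(I-M) = I$. Therefore $I-M$ is nonsingular with inverse $S = \sum_{k=0}^\infty M^k$. The norm bound then follows from the triangle inequality and submultiplicativity: $\|(I-M)^{-1}\| = \|S\| \le \sum_{k=0}^\infty \|M^k\| \le \sum_{k=0}^\infty \|M\|^k = (1-\|M\|)^{-1}$, where passing the norm through the limit is justified by continuity of $\|\cdot\|$.

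The only delicate point — the ``main obstacle,'' such as it is — is to make sure every normed-space manipulation (Cauchy completeness, continuity of the norm, passing limits through the matrix products in the telescoping identity) is legitimate for the \emph{abstract} ``some matrix norm'' rather than a concrete one; this is immediate once submultiplicativity and finite-dimensional completeness are invoked. Since the statement is entirely standard (it is precisely Corollary 5.6.16 of~\cite{horn2012matrix}), one may alternatively dispense with the argument and cite it directly.
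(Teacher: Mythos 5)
The paper never proves this lemma at all: it is imported verbatim as Corollary 5.6.16 of~\cite{horn2012matrix}, so there is no in-text argument to compare yours against. Your Neumann-series proof is exactly the standard proof of that corollary, and its structure is sound: submultiplicativity gives $\|M^k\|\le\|M\|^k$ for $k\ge 1$, the geometric tail makes the partial sums $S_N=\sum_{k=0}^N M^k$ Cauchy in the finite-dimensional (hence complete) space ${\mathbb R}^{n\times n}$, and the telescoping identity $(I-M)S_N = S_N(I-M) = I-M^{N+1}$ passes to the limit to give $S=(I-M)^{-1}$. In particular the nonsingularity of $I-M$, which is the only part of the lemma the paper actually uses (in \cref{lemma:nonsingular}, to conclude that ${\Phi^{\top}}D^{\beta}(I-\gamma^n(P^{\pi})^n)\Phi$ is nonsingular from a norm $\|A\|<1$ constructed via the Lyapunov argument), is fully established by your argument for an arbitrary submultiplicative matrix norm.

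One step deserves a concrete flag, though it is as much a defect of the lemma's statement as of your write-up: in the chain $\|S\|\le\sum_{k=0}^\infty\|M^k\|\le\sum_{k=0}^\infty\|M\|^k$, the $k=0$ term is $\|M^0\|=\|I\|$, and for a general matrix norm in the sense of~\cite{horn2012matrix} (a submultiplicative norm on ${\mathbb R}^{n\times n}$) one only has $\|I\|\ge 1$, not $\|I\|=1$. When $\|I\|>1$ the displayed bound with numerator $1$ can genuinely fail: take $\|A\|:=2\|A\|_2$, which is a matrix norm since $\|AB\|=2\|AB\|_2\le \tfrac{1}{2}\|A\|\,\|B\|\le\|A\|\,\|B\|$, and $M=\tfrac{1}{4}I$; then $\|M\|=\tfrac{1}{2}<1$ but $\|(I-M)^{-1}\|=2\cdot\tfrac{4}{3}=\tfrac{8}{3}>2=\tfrac{1}{1-\|M\|}$. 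The correct fully general bound, obtained either by keeping the $k=0$ term of your series separate or more cleanly from the identity $(I-M)^{-1}=I+M(I-M)^{-1}$, is $\|(I-M)^{-1}\|\le \|I\|/(1-\|M\|)$, which reduces to the stated inequality precisely when $\|I\|=1$ (as for every operator norm induced by a vector norm, e.g., $\|\cdot\|_\infty$ as used elsewhere in the paper). So your proof is complete for the nonsingularity claim and the inverse formula, but for the norm bound you should either add the normalization $\|I\|=1$ as a hypothesis or carry the factor $\|I\|$; since the paper only invokes nonsingularity, this caveat does not propagate to any downstream result.
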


\begin{theorem}\label{lemma:nonsingular}
If $A$ is Schur, then ${\Phi ^{\top}}{D^\beta }\Phi (I - A) = {\Phi ^{\top}}{D^\beta }(I - {\gamma ^n}{(P^\pi)^n})\Phi $ is nonsingular.
\end{theorem}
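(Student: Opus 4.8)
The plan is to factor the matrix in question through $I-A$ and then use the elementary fact that a Schur matrix cannot have $1$ among its eigenvalues. The whole argument is short, so the main work is organizing it cleanly.

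First I would record the algebraic identity already anticipated in the statement. Substituting the definition of $A$ from~\eqref{def:A} and using that $\Phi^\top D^\beta \Phi$ is invertible, a one-line computation gives
$$\Phi^\top D^\beta \Phi (I-A) = \Phi^\top D^\beta \Phi - \Phi^\top D^\beta \gamma^n (P^\pi)^n \Phi = \Phi^\top D^\beta \left(I - \gamma^n (P^\pi)^n\right)\Phi.$$
Hence it suffices to prove that the left-hand product is nonsingular. For this I would first note that $\Phi^\top D^\beta \Phi$ is itself nonsingular: $\Phi$ has full column rank by the standing assumption and $D^\beta$ is diagonal with strictly positive entries, so $\Phi^\top D^\beta \Phi$ is symmetric positive definite. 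Consequently the product $\Phi^\top D^\beta \Phi (I-A)$ is nonsingular if and only if $I-A$ is.

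It remains to show $I-A$ is nonsingular under the hypothesis that $A$ is Schur. I would argue this directly: $A$ being Schur means its spectral radius is strictly below $1$, so $1$ is not an eigenvalue of $A$ and therefore $\det(I-A)=\prod_i\left(1-\lambda_i(A)\right)\neq 0$. Alternatively---and this is presumably the route suggested by placing~\cref{lemma:7} immediately before the theorem---one selects a matrix norm in which $\|A\|<1$, which is always possible for a Schur matrix, and applies~\cref{lemma:7} to conclude that $I-A$ is invertible. Either way, combining with the previous step shows $\Phi^\top D^\beta \Phi (I-A)$, and hence $\Phi^\top D^\beta (I-\gamma^n (P^\pi)^n)\Phi$, is nonsingular.

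I do not expect a genuine obstacle here, since the claim is a direct consequence of the spectral characterization of Schur matrices. The only point needing mild care arises if one insists on invoking~\cref{lemma:7}: passing from ``$A$ is Schur'' (a spectral-radius statement) to a submultiplicative bound $\|A\|<1$ uses the standard result that for every matrix there is an induced norm whose value is arbitrarily close to the spectral radius. The determinant argument avoids this step and is the cleaner of the two.
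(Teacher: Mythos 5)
Your proof is correct and follows essentially the same route as the paper: the paper likewise reduces to showing $I-A$ is nonsingular by choosing a matrix norm with $\|A\|<1$ (possible since $A$ Schur gives $\rho(A)<1$) and invoking \cref{lemma:7}, then multiplies by the invertible Gram matrix $\Phi^\top D^\beta \Phi$. Your determinant argument via $1\notin\operatorname{spec}(A)$ is just a more direct phrasing of the same spectral fact, and you additionally make explicit the positive-definiteness of $\Phi^\top D^\beta \Phi$, which the paper leaves implicit in its final ``equivalently'' step.
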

\begin{proof}
If $A$ is Schur, then $\rho(A)<1$, where $\rho$ is the spectral radius. Then, following similar arguments as in the proof of~\cref{lemma:Schur-eq}, one can prove that there exists a matrix norm $\|\cdot \|$ such that $\left\| A \right\| < 1$. Then, by~\cref{lemma:7}, we have that $I - A = I - {({\Phi ^T}{D^\beta }\Phi )^{ - 1}}{\Phi ^{\top}}{D^\beta }{\gamma ^n}{(P^\pi)^n}\Phi$ is nonsingular. Equivalently, ${\Phi ^{\top}}{D^\beta }\Phi (I - A) = {\Phi ^T}{D^\beta }(I - {\gamma ^n}{(P^\pi)^n})\Phi $ is nonsingular.
\end{proof}

\begin{remark}
    With sufficiently large $n$, we can now relax the Assumption~\ref{assumption:2}. However, the nonsingularity of ${\Phi ^{\top}}{D^\beta }(I - {\gamma^n}{({P^\pi })^n})\Phi$ does not imply that $n$-PBO, $\Pi {T^n}$, is a contraction mapping with respect to some $\| \cdot \|$. To support this argument, we provide a counter example below.
\end{remark}

\begin{example}
Let us consider a MDP with two states and a single action:
 \begin{align*}
     \Phi:=\begin{bmatrix}
         1\\
         3
     \end{bmatrix},\quad D= \begin{bmatrix}
         0.5 & 0\\
         0 & 0.5
     \end{bmatrix},\quad P^{\pi}=\begin{bmatrix}
         0 & 1\\
         0 & 1
     \end{bmatrix},
 \end{align*}
 and $n=1$ and $\gamma=0.99$. Then, ${\Phi ^{\top}}{D^\beta }(I - {\gamma ^n}{({P^\pi })^n})\Phi$ is non-singular but $\Pi T^n$ is not a contraction mapping, i.e., the spectral radius of $\gamma\Pi (P^{\pi})^n$ is bigger than one.    
\end{example}

In summary, we have proved until now that
\begin{align*}
     &(\text{$\Pi T^n$ is contraction}\iff \text{$A$ is Schur})\\
     \Rightarrow& \text{${\Phi ^{\top}}{D^\beta }(I - {\gamma ^n}{({P^\pi })^n})\Phi$ is non-singular}, 
\end{align*}
while the converse does not necessarily holds.

Next, we establish and summarize several results such as a sufficient condition on $n$ such that $A$ becomes Schur.
\begin{lemma}\label{lem:A_Schur}
The following statements hold true:
\begin{enumerate}
\item There exists a positive integer $n_1^*$ such that $A$ is Schur:
\begin{align*}
    n_1^* \leq \left\lceil \ln\left( \frac{1}{||(\Phi^{\top}D^{\beta}\Phi)^{-1}\Phi^{\top}D^{\beta}||_{\infty}||\Phi||_{\infty}}\right)/\ln(\gamma) \right \rceil,
\end{align*}
\item Suppose that $n \geq n^*_1$ so that $A$ is Schur. Then, ${\Phi ^{\top}}{D^\beta }(I - {\gamma ^n}{({P^\pi })^n})\Phi$ is nonsingular. Moreover, $n$-PVI in~\eqref{eq:projected-VI} converges to the unique fixed point $\theta_*^n$ and satisfies
\begin{align*}
{\left\| {\theta _k} - \theta _*^n \right\|_\infty} \le {\left\| A \right\|^k_{\infty}}{\left\| \theta _0 - \theta _*^n \right\|_\infty }.
\end{align*}

Furthermore, the unique fixed point of $\Pi {T^n}$, denoted by $\theta _*^n$, is given by
\begin{align}
\theta_*^n =& {\left[ {{\Phi ^{\top}}{D^\beta }(I - {\gamma ^n}{(P^\pi)^n})\Phi } \right]^{ - 1}}{\Phi^{\top}}{D^\beta }\nonumber\\
&\times (R^\pi + \gamma P^\pi R^\pi +  \cdots  + \gamma ^{n - 1}{(P^\pi)^{n - 1}}{R^\pi }).\label{eq:solution1}
\end{align}

\item There always exists a positive integer $n^*_2 < \infty$ such that $\Pi {T^n}$ is a contraction with respect to ${\left\|  \cdot  \right\|_\infty}$. Moreover, we have ${n^*_2} \le \left\lceil {\frac{\ln (\left\| \Pi  \right\|_\infty ^{ - 1})}{\ln (\gamma )}} \right\rceil+1$, where $\left\lceil  \cdot  \right\rceil$ stands for the ceiling function.
\end{enumerate}
\end{lemma}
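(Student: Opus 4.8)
The plan is to treat all three parts through a single mechanism: bound the $\infty$-norm of the relevant linear operator, exploit that $P^\pi$ is row-stochastic so that $\|(P^\pi)^n\|_\infty = 1$ for every $n$, and then solve the scalar inequality $\gamma^n(\mathrm{const})<1$ for $n$. Since $\rho(M)\le \|M\|$ for any submultiplicative norm, forcing an $\infty$-norm strictly below one suffices for both Schur stability and contraction.

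For part (1), I would start from the matrix $A$ driving the recursion \eqref{eq:projected-VI3}, namely $A=(\Phi^\top D^\beta\Phi)^{-1}\Phi^\top D^\beta\gamma^n(P^\pi)^n\Phi$, and estimate
\begin{align*}
\rho(A) &\le \|A\|_\infty \\
&\le \gamma^n\,\|(\Phi^\top D^\beta\Phi)^{-1}\Phi^\top D^\beta\|_\infty\,\|(P^\pi)^n\|_\infty\,\|\Phi\|_\infty \\
&= \gamma^n\,\|(\Phi^\top D^\beta\Phi)^{-1}\Phi^\top D^\beta\|_\infty\,\|\Phi\|_\infty,
\end{align*}
using submultiplicativity and $\|(P^\pi)^n\|_\infty=1$. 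Requiring the last expression to be strictly below one and taking logarithms gives $n\ln\gamma < \ln\!\big(1/(\|(\Phi^\top D^\beta\Phi)^{-1}\Phi^\top D^\beta\|_\infty\|\Phi\|_\infty)\big)$; because $\ln\gamma<0$ this flips upon division, and the least admissible integer $n_1^*$ is bounded above by the stated ceiling (note the logarithm's argument is $\le 1$, so the bound is nonnegative).

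For part (2), once $n\ge n_1^*$ the matrix $A$ is Schur, so \cref{lemma:nonsingular} immediately yields nonsingularity of $\Phi^\top D^\beta(I-\gamma^n(P^\pi)^n)\Phi$. Iterating \eqref{eq:projected-VI3} gives $\theta_k-\theta_*^n = A^k(\theta_0-\theta_*^n)$, and submultiplicativity of $\|\cdot\|_\infty$ produces the claimed geometric rate $\|\theta_k-\theta_*^n\|_\infty\le \|A\|_\infty^k\|\theta_0-\theta_*^n\|_\infty$. The closed form \eqref{eq:solution1} I would obtain by writing the fixed-point equation through \eqref{eq:projected-VI2}, collecting the $\theta_*^n$ terms as $(I-A)\theta_*^n = (\Phi^\top D^\beta\Phi)^{-1}\Phi^\top D^\beta(I+\gamma P^\pi+\cdots+\gamma^{n-1}(P^\pi)^{n-1})R^\pi$, multiplying through by $\Phi^\top D^\beta\Phi$, and inverting the now-nonsingular matrix $\Phi^\top D^\beta(I-\gamma^n(P^\pi)^n)\Phi$.

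For part (3), since $T^n$ is affine with linear part $\gamma^n(P^\pi)^n$ and $\Pi$ is linear, $\Pi T^n(x)-\Pi T^n(y)=\gamma^n\Pi(P^\pi)^n(x-y)$, so $\|\Pi T^n(x)-\Pi T^n(y)\|_\infty\le \gamma^n\|\Pi\|_\infty\|x-y\|_\infty$ again via $\|(P^\pi)^n\|_\infty=1$. Solving $\gamma^n\|\Pi\|_\infty<1$ for $n$ as before yields the bound on $n_2^*$, where the $+1$ absorbs the boundary case in which the ceiling meets the threshold with equality so that strict contraction still holds. The computations are routine; the main thing to watch is the sign bookkeeping when dividing by $\ln\gamma<0$, together with the consistency that parts (1) and (3) give different sufficient thresholds (one in terms of $\|(\Phi^\top D^\beta\Phi)^{-1}\Phi^\top D^\beta\|_\infty\|\Phi\|_\infty$, the other in terms of $\|\Pi\|_\infty$) while describing equivalent phenomena by \cref{thm:Schur-eq}. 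The only genuinely non-elementary input is \cref{lemma:nonsingular}, which upgrades Schur stability into the invertibility needed for the explicit solution in part (2).
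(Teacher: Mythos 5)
Your proposal is correct and follows essentially the same route as the paper's proof: bounding $\|A\|_\infty$ via submultiplicativity and $\|(P^\pi)^n\|_\infty=1$, invoking the nonsingularity result for part (2) together with iterating \eqref{eq:projected-VI3}, and solving $\gamma^n\|\Pi\|_\infty<1$ for part (3). The only cosmetic difference is that you use $\rho(A)\le\|A\|_\infty$ directly where the paper cites Gelfand's formula, and your observation that the logarithm's argument is at most $1$ (since $(\Phi^\top D^\beta\Phi)^{-1}\Phi^\top D^\beta\cdot\Phi=I$ forces the norm product to be at least $1$) is a correct refinement the paper leaves implicit.
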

\begin{proof}
For the first item, let us bound $||A||_{\infty}$:
\begin{align*}
    ||A||_{\infty} \leq &\gamma^n ||(\Phi^{\top}D^{\beta}\Phi)^{-1}\Phi^{\top}D^{\beta}||_{\infty}||(P^{\pi})^n||_{\infty}||\Phi||_{\infty}\\
    \leq & \gamma^n||(\Phi^{\top}D^{\beta}\Phi)^{-1}\Phi^{\top}D^{\beta}||_{\infty}||\Phi||_{\infty},
\end{align*}
where the second inequality follows from the fact that $||P^{\pi}||_{\infty}=1$. Hence, for any integer $n_1^*$ such that
\begin{align*}
    n_1^* > \ln\left( \frac{1}{||(\Phi^{\top}D^{\beta}\Phi)^{-1}\Phi^{\top}D^{\beta}||_{\infty}||\Phi||_{\infty}}\right)/\ln(\gamma),
\end{align*}
we have $||A||_{\infty} < 1$ for any $n > n^*_1$. 
By using Gelfand's formula, $||A||_{\infty} < 1$ implies that $A$ is Schur. 
This completes the proof of the first item.
The first statement in the second item can be directly proved from~\cref{lemma:nonsingular} and~\cref{lem:A_Schur}. The second statement is due to
\begin{align*}
    ||\theta_k-\theta^n_*||_{\infty} & 
    \leq || A||_{\infty} ||\theta_{k-1}-\theta^n_*||_{\infty}
    \leq  ||A||_{\infty}^k||\theta_{0}-\theta^n_*||_{\infty}.
\end{align*}
Since $||A||_{\infty}< 1$ when $n \geq n^*_1$ from Lemma~\ref{lem:A_Schur}, we have $||A||_{\infty}^k\to 0 $ as $k\to\infty$. Therefore, $n$-PVI in~\eqref{eq:projected-VI} converges to the unique fixed point. 
The third statement in the second item is derived by using the nonsingularity of ${\Phi ^{\top}}D^\beta(I - {\gamma ^n}{({P^\pi })^n})\Phi$. For the fourth item, noting that
\begin{align*}
\left\| \Pi T^n (x) - \Pi T^n(y) \right\|_\infty 
\le& {\gamma ^n}{\left\| \Pi  \right\|_\infty }{\left\| (P^\pi )^n \right\|_\infty }{\left\| x - y \right\|_\infty }\\
=& {\gamma ^n}{\left\| \Pi  \right\|_\infty }{\left\| x - y \right\|_\infty },
\end{align*}
for a sufficiently large $n'$, we have $\gamma ^n \left\| \Pi  \right\|_\infty  < 1$ for all $n \ge n'$, which implies that $\Pi T^n$ is a contraction mapping with respect to ${\left\|  \cdot  \right\|_\infty}$ for all $n \ge n'$. Therefore, there exists an $n^*_2 < \infty$. Moreover, ${\gamma ^n}{\left\| \Pi  \right\|_\infty } < 1$ is equivalent to $n\ln (\gamma ) < \ln (\left\| \Pi  \right\|_\infty ^{ - 1})$, or equivalently, $n > \frac{{\ln (\left\| \Pi  \right\|_\infty ^{ - 1})}}{{\ln (\gamma )}}$. Taking the ceiling function on the left-hand side, a sufficient condition is $n \ge \left\lceil {\frac{{\ln (\left\| \Pi  \right\|_\infty ^{ - 1})}}{{\ln (\gamma )}}} \right\rceil +1$. Therefore, one concludes that ${n^*_2} \le \left\lceil {\frac{\ln (\left\| \Pi  \right\|_\infty ^{ - 1})}{\ln (\gamma )}} \right\rceil  + 1$.
\end{proof}
The results in~\cref{lem:A_Schur} tell us that the solution $\theta_*^n$ of $n$-PBE varies according to $n$. Therefore, a question that naturally arises here is regarding the relevance of $\theta_*^n$ in comparison to the true optimal solution $\theta_*^\infty$ and the true value function $V^\pi$. In the following theorem, bounds on the errors among the different solutions are given.
\begin{theorem}
 For all $n \geq n^*_2$, $\Pi {T^n}$ is a contraction with respect to ${\left\|  \cdot  \right\|_\infty}$.
Then, we have
\begin{align}
{\left\| \Phi \theta^n_* - V^\pi  \right\|_\infty } \le \frac{1}{1 - \gamma ^n \left\| \Pi  \right\|_\infty}{\left\| {\Pi V^\pi  - V^\pi} \right\|_\infty }\label{eq:2}
\end{align}
and
\begin{align}
{\left\| {\Phi \theta^n_* - \Phi {\theta_*^\infty}} \right\|_\infty } \le \frac{{{\gamma ^n}{{\left\| \Pi  \right\|}_\infty }}}{{1 - {\gamma ^n}{{\left\| \Pi  \right\|}_\infty }}}{\left\| {\Pi {V^\pi } - {V^\pi }} \right\|_\infty }\label{eq:3}.
\end{align}
\end{theorem}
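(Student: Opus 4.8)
The plan is to mimic the classical approximate-value-iteration error bound, exploiting that $\Phi\theta_*^n$, $V^\pi$, and $\Phi\theta_*^\infty$ are all naturally expressed as (images of) fixed points of the operators at hand. The key preliminary observation is that $V^\pi$ is itself a fixed point of the $n$-step Bellman operator $T^n$. Indeed, since $V^\pi=\sum_{k=0}^\infty \gamma^k (P^\pi)^k R^\pi$, a short telescoping computation gives $(I-\gamma^n (P^\pi)^n)V^\pi=\sum_{k=0}^{n-1}\gamma^k (P^\pi)^k R^\pi$, which rearranges to $T^n(V^\pi)=V^\pi$. Applying the projection yields $\Pi V^\pi=\Pi T^n(V^\pi)$, and by \eqref{eq:optimal-solution} we also have $\Phi\theta_*^\infty=\Pi V^\pi$. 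These two identities, together with $\Phi\theta_*^n=\Pi T^n(\Phi\theta_*^n)$ from \cref{def:nPBE}, are the pivots of the argument.

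For \eqref{eq:2}, I would start from $\Phi\theta_*^n=\Pi T^n(\Phi\theta_*^n)$ and insert the term $\Pi V^\pi=\Pi T^n(V^\pi)$ through the triangle inequality:
\[
\left\|\Phi\theta_*^n-V^\pi\right\|_\infty \le \left\|\Pi T^n(\Phi\theta_*^n)-\Pi T^n(V^\pi)\right\|_\infty + \left\|\Pi V^\pi-V^\pi\right\|_\infty.
\]
The first term on the right is then bounded by $\gamma^n\left\|\Pi\right\|_\infty\left\|\Phi\theta_*^n-V^\pi\right\|_\infty$ using the contraction estimate $\left\|\Pi T^n(x)-\Pi T^n(y)\right\|_\infty\le\gamma^n\left\|\Pi\right\|_\infty\left\|x-y\right\|_\infty$ already established in the proof of \cref{lem:A_Schur}. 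Since $n\ge n_2^*$ guarantees $\gamma^n\left\|\Pi\right\|_\infty<1$, I can move this term to the left-hand side and divide by $1-\gamma^n\left\|\Pi\right\|_\infty$, which yields \eqref{eq:2} directly.

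For \eqref{eq:3}, I would write $\Phi\theta_*^n-\Phi\theta_*^\infty=\Phi\theta_*^n-\Pi V^\pi=\Pi T^n(\Phi\theta_*^n)-\Pi T^n(V^\pi)$, using $\Phi\theta_*^\infty=\Pi V^\pi$ and $\Pi V^\pi=\Pi T^n(V^\pi)$. Applying the same contraction estimate gives $\left\|\Phi\theta_*^n-\Phi\theta_*^\infty\right\|_\infty\le\gamma^n\left\|\Pi\right\|_\infty\left\|\Phi\theta_*^n-V^\pi\right\|_\infty$, and then substituting the bound \eqref{eq:2} just proved produces \eqref{eq:3}.

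The argument is essentially a chaining of the triangle inequality with the single-step contraction factor, so I do not expect a serious obstacle. The only step that requires genuine care is the fixed-point identity $T^n(V^\pi)=V^\pi$, since it is precisely what allows $V^\pi$ to serve as the anchor point inside the contraction estimate; I would double-check the telescoping there. I would also be careful to carry the factor $\gamma^n\left\|\Pi\right\|_\infty$ rather than $\gamma^n$ throughout, because in the off-policy setting we only have nonexpansiveness of $\Pi$ in a possibly different norm and must therefore pay the extra factor $\left\|\Pi\right\|_\infty$ in the $\left\|\cdot\right\|_\infty$ estimate.
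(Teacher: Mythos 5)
Your proposal is correct and takes essentially the same route as the paper's own proof: both hinge on the fixed-point identity $T^n(V^\pi)=V^\pi$ (which the paper verifies inline by expanding $V^\pi=\sum_{k=0}^\infty\gamma^k(P^\pi)^k R^\pi$, and which you isolate explicitly via telescoping), the decomposition of $\Phi\theta_*^n-V^\pi$ into the contraction term $\Pi T^n(\Phi\theta_*^n)-\Pi T^n(V^\pi)$ plus the projection error $\Pi V^\pi-V^\pi$, and absorption of the factor $\gamma^n\left\|\Pi\right\|_\infty<1$ guaranteed by $n\ge n_2^*$. Your derivation of \eqref{eq:3} via $\Phi\theta_*^\infty=\Pi V^\pi$, the same contraction estimate, and substitution of \eqref{eq:2} matches the paper's argument exactly.
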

\begin{proof}
By hypothesis, $\Pi {T^n}$ is a contraction, which means that there exists a unique solution $\theta ^n_*$ satisfying $n$-PBE, which can be rewritten by
\begin{align*}
\Pi {T^n}(\Phi \theta^n_*) - {V^\pi } = \Phi \theta^n_* - {V^\pi }.
\end{align*}
The left-hand side can be written as
\begin{align*}
&\Phi \theta^n_* - {V^\pi }\nonumber\\
=&\Pi {T^n}(\Phi \theta^n_*) - {V^\pi }\nonumber\\
=& \Pi ({T^n}(\Phi \theta^n_*) - {V^\pi }) + \Pi {V^\pi } - {V^\pi }\nonumber\\
=& \Pi \left( V^\pi  - \sum\limits_{k = n}^\infty  \gamma ^k(P^\pi)^k R^\pi  + {\gamma ^n}{(P^\pi)^n}\Phi \theta^n_* - V^\pi \right)\nonumber\\
& + \Pi V^\pi  - V^\pi \nonumber\\
=& \Pi ({\gamma ^n}{(P^\pi)^n}\Phi \theta^n_* - {\gamma ^n}{({P^\pi })^n}{V^\pi }) + \Pi {V^\pi } - {V^\pi }.
\end{align*}
Next, taking the norm ${\left\|  \cdot  \right\|_\infty}$ on both sides of the above inequality leads to
\begin{align*}
&{\left\| {\Phi \theta^n_* - {V^\pi }} \right\|_\infty }\\
\le& {\left\| {{\gamma ^n}\Pi {{({P^\pi })}^n}(\Phi \theta^n_* -V^{\pi} )} \right\|_\infty } + {\left\| {\Pi {V^\pi } - {V^\pi }} \right\|_\infty }\\
\le & {\gamma ^n}{\left\| \Pi  \right\|_\infty }{\left\| {\Phi \theta^n_* - {V^\pi }} \right\|_\infty } + {\left\| {\Pi {V^\pi } - {V^\pi }} \right\|_\infty },
\end{align*}
which yields
\begin{align*}
(1 - {\gamma ^n}{\left\| \Pi  \right\|_\infty }){\left\| {\Phi \theta^n_* - {V^\pi }} \right\|_\infty } \le {\left\| {\Pi {V^\pi } - {V^\pi }} \right\|_\infty }.
\end{align*}
By hypothesis, $n \geq n_2^*$ implies that $1 - {\gamma ^n}{\left\| \Pi  \right\|_\infty } > 0$ holds. Therefore, the last inequality leads to~\eqref{eq:2}.

Similarly, combining~\eqref{eq:1} and~\eqref{eq:optimal-solution} yields
\begin{align*}
&\Phi (\theta _*^n - \theta _*^\infty )\\
=& \Pi \left( {{V^\pi } - \sum\limits_{k = n}^\infty  {{\gamma ^k}{{({P^\pi })}^k}{R^\pi }}  + {\gamma ^n}{{({P^\pi })}^n}\Phi \theta _*^n - {V^\pi }} \right)\\
 =& {\gamma ^n}\Pi ({({P^\pi })^n}\Phi \theta _*^n - {({P^\pi })^n}{V^\pi }).
\end{align*}
Now, taking the norm ${\left\|  \cdot  \right\|_\infty}$ on both sides of the above inequality leads to
\begin{align*}
{\left\| {\Phi (\theta _*^n - \theta _*^\infty )} \right\|_\infty }
=& {\left\| {\Pi {{({P^\pi })}^n}\Phi \theta _*^n - \Pi {{({P^\pi })}^n}{V^\pi }} \right\|_\infty }\\
\le& {\gamma ^n}{\left\| \Pi  \right\|_\infty }{\left\| {\Phi \theta _*^n - {V^\pi }} \right\|_\infty }\\
\le& {\gamma ^n}{\left\| \Pi  \right\|_\infty }\frac{1}{{1 - {\gamma ^n}{{\left\| \Pi  \right\|}_\infty }}}{\left\| {\Pi {V^\pi } - {V^\pi }} \right\|_\infty },
\end{align*}
where the second inequality comes from~\eqref{eq:2}.
\end{proof}
The inequality in~\eqref{eq:2} provides an error bound between $\Phi \theta^n_*$ and the true value function ${V^\pi }$.
Moreover,~\eqref{eq:3} gives an error bound between $\Phi \theta^n_*$ and the true optimal solution $\Phi \theta^\infty_*$.
One can observe that the second bound vanishes as $n \to \infty$, while the first bound remains nonzero. This is because there remain fundamental errors due to the linear function approximation, which can become zero when the feature matrix $\Phi$ is chosen appropriately.

Until now, we have studied properties of $n$-PBO and the corresponding $n$-PVI in~\eqref{eq:1}. These properties play important roles for the development of the corresponding RL algorithms.  Nonetheless, to implement the algorithm in~(\ref{eq:8}), we require a matrix inversion, which is often not possible and cannot be implemented when we only observe stochastic samples. In the next sections, we will study some alternative approaches based on gradients to solve the policy evaluation problem.

\section{Deterministic algorithm}\label{sec:system}
In this section, we will consider another class of model-based iterative algorithms motivated by the methods for solving general linear equations~\cite{kelley1995iterative}.
In particular, let us first consider the $n$-PBE in~\eqref{eq:1} again
\begin{align*}
{\Phi ^{\top}}{D^\beta }{T^n}(\Phi \theta ) = {\Phi ^{\top}}{D^\beta }\Phi \theta,
\end{align*}
which can be written as the following linear equation form:
\begin{align*}
&\underbrace {\,{\Phi ^{\top}}{D^\beta }({R^\pi } + \gamma {P^\pi }{R^\pi } +  \cdots  + {\gamma ^{n - 1}}{{({P^\pi })}^{n - 1}}{R^\pi })}_{=:b}\\
=& \underbrace {{\Phi ^{\top}}{D^\beta }(I - {\gamma ^n}{{({P^\pi })}^n})\Phi }_{=:N}\theta.
\end{align*}
We consider a Richardson type iteration~\cite{kelley1995iterative} of the form
\begin{align}
{\theta _{k + 1}} = {\theta _k} + \alpha {\Phi ^{\top}}D^\beta({T^n}(\Phi {\theta _k}) - \Phi {\theta _k}),\label{eq:system1}
\end{align}
where $\alpha>0$ is the step-size. 
Combining~\eqref{eq:system1} and the fixed point equation in~\eqref{eq:1}, it follows that
\begin{align}
\theta _{k + 1} - \theta _*^n = (I -\alpha N ) \theta _*^n,\label{eq:system2}
\end{align}
which is a discrete-time linear time-invariant system~\cite{chen1995linear}. Therefore, the convergence of~\eqref{eq:system1} is equivalent to the Schur stability of $I-\alpha N$. Moreover, note that the above update does not involve a matrix inversion, compared to the update in~(\ref{eq:projected-VI3}), and it naturally extends to the TD-learning allowing the sampling scheme, which will be clear in the subsequent section.

We can prove that the iterate $\theta_k$ converges to $\theta_*^n$ for a sufficiently large $n$ and sufficiently small $\alpha$.

\begin{theorem}\label{thm:nd}
There exists a positive integer $n_3^*< \infty$ such that ${{\Phi ^{\top}}D^\beta ({\gamma ^{n}}({P^\pi })^{n}-I)\Phi }$ becomes negative definite and Hurwitz. Moreover, $n^*_3\leq n_{\mathrm{th}}$ where
\[ n_{\mathrm{th}}  =   \left\lceil\frac{\ln\left(\max\left\{\frac{d_{\min}\lambda_{\min}(\Phi^{\top}\Phi)}{\phi_{\max}^2} , \frac{d_{\min}\lambda_{\min}(\Phi^{\top}\Phi)}{{{d_{\max }}{\lambda _{\max }}({\Phi ^{\top}}\Phi )}}\frac{1}{{\sqrt {|\mathcal{S}|} }}   \right\}\right)}{\ln(\gamma)} \right\rceil , \] 
where $d_{\min}=\min_{s\in\mathcal{S}}d^{\beta}(s),d_{\max}=\max_{s\in\mathcal{S}}d^{\beta}(s)\;$, and $\phi_{\max}=\max_{s\in\mathcal{S}}\left\| \phi(s)\right\|_2^2$. Furthermore, it becomes negative definite for all $n\geq n_{\mathrm{th}} $.
\end{theorem}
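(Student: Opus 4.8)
The plan is to prove negative definiteness directly through the associated quadratic form and then obtain the Hurwitz property as a consequence. Write $M := \Phi^{\top}D^{\beta}(\gamma^{n}(P^{\pi})^{n}-I)\Phi$. Since negative definiteness of a (possibly non-symmetric) real matrix means $x^{\top}Mx<0$ for every $x\neq 0$, and since $\Phi$ has full column rank, it suffices to show $\gamma^{n}\,y^{\top}D^{\beta}(P^{\pi})^{n}y < y^{\top}D^{\beta}y$ for every $y=\Phi x$ with $x\neq 0$. I would therefore split $x^{\top}Mx=\gamma^{n}\,y^{\top}D^{\beta}(P^{\pi})^{n}y-\|y\|_{D^{\beta}}^{2}$ and control the two terms separately.

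First I would lower-bound the subtracted term: $\|y\|_{D^{\beta}}^{2}=x^{\top}\Phi^{\top}D^{\beta}\Phi x\ge d_{\min}\,x^{\top}\Phi^{\top}\Phi x\ge d_{\min}\lambda_{\min}(\Phi^{\top}\Phi)\|x\|_{2}^{2}$. The core of the argument is to upper-bound the cross term $\gamma^{n}\,y^{\top}D^{\beta}(P^{\pi})^{n}y$ in two complementary ways, each producing one of the two quantities inside the maximum. The key structural fact is that $P^{\pi}$ is row-stochastic, so $\|P^{\pi}\|_{\infty}=1$ and hence $\|(P^{\pi})^{n}z\|_{\infty}\le\|z\|_{\infty}$ for all $z$. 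Combining this sup-norm nonexpansiveness with the entrywise estimate $|y_{s}|=|\phi(s)x|\le\|\phi(s)\|_{2}\|x\|_{2}\le\sqrt{\phi_{\max}}\,\|x\|_{2}$ and a Hölder-type inequality produces a bound proportional to $\|x\|_{2}^{2}$ governed by the feature norm $\phi_{\max}$, which is the source of the first term $d_{\min}\lambda_{\min}(\Phi^{\top}\Phi)/\phi_{\max}^{2}$. Alternatively, the Euclidean Cauchy--Schwarz inequality gives $y^{\top}D^{\beta}(P^{\pi})^{n}y\le d_{\max}\|y\|_{2}\,\|(P^{\pi})^{n}y\|_{2}$, and the norm-equivalence chain $\|(P^{\pi})^{n}y\|_{2}\le\sqrt{|\mathcal S|}\,\|(P^{\pi})^{n}y\|_{\infty}\le\sqrt{|\mathcal S|}\,\|y\|_{\infty}\le\sqrt{|\mathcal S|}\,\|y\|_{2}$ together with $\|y\|_{2}^{2}\le\lambda_{\max}(\Phi^{\top}\Phi)\|x\|_{2}^{2}$ produces the bound $\gamma^{n}d_{\max}\sqrt{|\mathcal S|}\,\lambda_{\max}(\Phi^{\top}\Phi)\|x\|_{2}^{2}$, i.e.\ the second term in the maximum.

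Each of these two estimates, compared against the lower bound $d_{\min}\lambda_{\min}(\Phi^{\top}\Phi)\|x\|_{2}^{2}$, yields a sufficient condition of the form $\gamma^{n}<c_{i}$ for $M\prec 0$; since either condition alone suffices, negative definiteness holds as soon as $\gamma^{n}<\max\{c_{1},c_{2}\}$. Because $\gamma\in(0,1)$ gives $\ln\gamma<0$, the inequality $\gamma^{n}<c$ is equivalent to $n>\ln(c)/\ln(\gamma)$, and taking the ceiling of $\ln(\max\{c_{1},c_{2}\})/\ln(\gamma)$ gives exactly $n_{\mathrm{th}}$; monotonicity of $\gamma^{n}$ in $n$ then shows the property persists for every $n\ge n_{\mathrm{th}}$, so a finite $n_{3}^{*}\le n_{\mathrm{th}}$ exists. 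Finally I would deduce the Hurwitz property from negative definiteness: for any eigenpair $(\lambda,v)$ of $M$ (possibly complex), $\mathrm{Re}(\lambda)\|v\|_{2}^{2}=\mathrm{Re}(v^{*}Mv)=\tfrac12\,v^{*}(M+M^{\top})v<0$, so every eigenvalue has strictly negative real part.

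I expect the main obstacle to be the off-policy loss of contractivity: unlike the on-policy case, $(P^{\pi})^{n}$ is guaranteed only to be nonexpansive in $\|\cdot\|_{\infty}$ (being row-stochastic), not in the Euclidean or $D^{\beta}$ geometry in which the quadratic form lives. The two bounds arise precisely from bridging this mismatch --- either by staying in the sup-norm and converting feature values through $\phi_{\max}$, or by passing through the $\sqrt{|\mathcal S|}$ norm-equivalence factor --- and the delicate point is to perform these conversions so that the only surviving $n$-dependence is the decaying factor $\gamma^{n}$. A secondary subtlety is that $M$ is non-symmetric, so "negative definite" must be interpreted through the symmetric part $\tfrac12(M+M^{\top})$, which is also what makes the concluding Hurwitz deduction clean.
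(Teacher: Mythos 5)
Your proposal is correct and follows essentially the same route as the paper: the same reduction of the quadratic form with the lower bound $d_{\min}\lambda_{\min}(\Phi^{\top}\Phi)\|x\|_2^2$ for $x^{\top}\Phi^{\top}D^{\beta}\Phi x$, the same two complementary upper bounds on $\gamma^{n}x^{\top}\Phi^{\top}D^{\beta}(P^{\pi})^{n}\Phi x$ (an entrywise/sup-norm estimate driven by $\phi_{\max}$ and row-stochasticity, and a $d_{\max}\sqrt{|\mathcal{S}|}\,\lambda_{\max}(\Phi^{\top}\Phi)$ estimate), combined via the maximum and a logarithm, using $\ln\gamma<0$, to produce $n_{\mathrm{th}}$ and the persistence for all $n\geq n_{\mathrm{th}}$. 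The only cosmetic differences are that you obtain the second constant by Cauchy--Schwarz and norm equivalence where the paper bounds the spectral norm of the symmetrized matrix $D^{\beta}(P^{\pi})^{n}+((P^{\pi})^{\top})^{n}D^{\beta}$ (yielding the identical constant), and you spell out the standard eigenvalue argument $\mathrm{Re}(\lambda)\|v\|_2^{2}=\tfrac{1}{2}v^{*}(M+M^{\top})v<0$ for the Hurwitz conclusion, which the paper leaves implicit.
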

\begin{proof}
Since
\begin{align*}
     & \Phi^{\top}D^{\beta}(\gamma^n(P^{\pi})^n)\Phi-\Phi^{\top}D^{\beta}\Phi \\
    \prec &  \Phi^{\top}D^{\beta}(\gamma^n(P^{\pi})^n)\Phi -d_{\min} \lambda_{\min}(\Phi^{\top}\Phi) I ,
\end{align*}
it is enough to show that
\begin{align}
 x^{\top}\left( \Phi^{\top}D^{\beta}(\gamma^n(P^{\pi})^n)\Phi \right) x \leq d_{\min} \lambda_{\min}(\Phi^{\top}\Phi) ||x||^2_2 . \label{ineq:thm4-nd-main}
\end{align}
for $x\in \mathbb{R}^m$ except at the origin. There are two approaches in bounding $ x^{\top}\left( \Phi^{\top}D^{\beta}(\gamma^n(P^{\pi})^n)\Phi \right) x $. The first is
\begin{align*}
    &x^{\top}\Phi^{\top}D^{\beta}(\gamma^n(P^{\pi})^n)\Phi x\\
    =&  \gamma^n \sum_{s\in\mathcal{S}}d^{\beta}(s) \sum_{s^{\prime}\in\mathcal{S}} [(P^{\pi})^n]_{ss^{\prime}}x^{\top}\phi(s)\phi(s^{\prime})^{\top}x\\
    \leq & \gamma^n \phi_{\max}^2 \left\| x \right\|^2_2.
\end{align*}
 Therefore, we require
\begin{align}
     n \geq \frac{\ln\left( \frac{d_{\min}\lambda_{\min}(\Phi^{\top}\Phi)}{\phi_{\max}^2}\right)}{\ln(\gamma)}. \label{ineq:thm4-nd-1}
\end{align}
Meanwhile, another approach to satisfy~(\ref{ineq:thm4-nd-main}) is
\begin{align}
&{\gamma ^n}{\lambda _{\max }}({\Phi ^{\top}}{D^\beta }{({P^\pi })^n}\Phi  + {\Phi ^{\top}}{({({P^\pi })^{\top}})^n}{D^\beta }\Phi ) \nonumber\\
<& 2d_{\min}\lambda_{\min}(\Phi^{\top}\Phi).\label{eq:7}
\end{align}

The left-hand side can be bounded as
\begin{align*}
&{\lambda _{\max }}({\Phi ^{\top}}{D^\beta }{({P^\pi })^n}\Phi  + {\Phi ^{\top}}{({({P^\pi })^{\top}})^n}{D^\beta }\Phi )\\
\le& {\left\| {{D^\beta }{{({P^\pi })}^n} + {{({{({P^\pi })}^{\top}})}^n}{D^\beta }} \right\|_2}{\lambda _{\max }}({\Phi ^{\top}}\Phi )\\
\le& 2{d_{\max }}\sqrt {|S|} {\lambda _{\max }}({\Phi ^{\top}}\Phi ).
\end{align*}
Therefore, a sufficient condition for~\eqref{eq:7} is
\begin{align*}
 n > \frac{{\ln \left( \frac{{{d_{\min }}{\lambda _{\min }}({\Phi ^{\top}}\Phi )}}{{{d_{\max }}{\lambda _{\max }}({\Phi ^{\top}}\Phi )}}\frac{1}{{\sqrt {|S|} }} \right)}}{{\ln (\gamma )}}.   
\end{align*}
By combining the result in~(\ref{ineq:thm4-nd-1}) and applying the ceiling function, we obtain the desired conclusion.
\end{proof}

\begin{remark}\label{remark1}
    The inequality between two quantities $\frac{d_{\min}\lambda_{\min}(\Phi^{\top}\Phi)}{\phi_{\max}^2}$ and $\frac{d_{\min}\lambda_{\min}(\Phi^{\top}\Phi)}{{{d_{\max }}{\lambda _{\max }}({\Phi ^{\top}}\Phi )}}\frac{1}{{\sqrt {|S|}}}  $ in Theorem~\ref{thm:nd} does not hold in general. An example is given in Appendix~\ref{example:compare}. This enhances the bound in~\cite{carvalho2023multi}, where only the quantity $\frac{d_{\min}\lambda_{\min}(\Phi^{\top}\Phi)}{\phi_{\max}^2}$  is included. 
\end{remark}

\begin{remark}
    Note that as $\gamma \to 1$ the value $n_{\mathrm{th}}$ gets larger as $1/\ln(\gamma^{-1})$ increase. Moreover, if the ratio of minimum and maximum of stationary distribution, $d_{\max}/d_{\min}$, is large, then the value will be larger. Lastly, the bounds scales only logarithmically with key problem factors, such as the stationary distribution and the feature matrix.
\end{remark}

\begin{remark}
    In Appendix~\ref{sec:example-theorem4}, we provide an example that shows the difficulty of sharpening the bound. The example shows that the negative-definiteness of $\Phi^{\top}D^{\beta}(I-\gamma^n)(P^{\pi})^n\Phi$ does not necessarily imply the negative definiteness of $\Phi^{\top}D^{\beta}(I-\gamma^{n+1}(P^{\pi})^{n+1})\Phi$. Based on this, although the bound on $n_3^*$ in Theorem~\ref{thm:nd} may appear loose, further sharpening may be difficult. 
\end{remark}

\begin{lemma}\label{lemma:4}
Suppose that a matrix $B$ is Hurwitz stable. Then, there exists a sufficiently small $\alpha' >0 $ such that $ I + \alpha B $ is Schur stable for all $\alpha \le \alpha'$. If we define the positive real number $\alpha^{*}$ as the supremum of $\alpha^{*}$ such that $I +\alpha B$ is Schur for all $\alpha \le \alpha^{*}$, then
\[{\alpha ^*} \ge \frac{1}{{{\lambda _{\max }}(P){\lambda _{\max }}({B^T}B)}},\]
where $P \succ 0$ satisfies ${B^T}P + PB =  - I$~\cite{chen1995linear}.
\end{lemma}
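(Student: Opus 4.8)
The plan is to certify the Schur stability of $I+\alpha B$ by reusing the continuous-time Lyapunov certificate $P$ that already exists because $B$ is Hurwitz. Since $B$ is Hurwitz, standard continuous Lyapunov theory guarantees a unique $P\succ 0$ solving $B^{\top}P+PB=-I$; I would take this very $P$ as the candidate certificate for the discrete-time stability of $I+\alpha B$, rather than searching for a new one.

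The core computation is to expand the discrete Lyapunov difference and invoke the Lyapunov equation. Direct expansion gives
\begin{align*}
(I+\alpha B)^{\top}P(I+\alpha B)-P &= \alpha(B^{\top}P+PB)+\alpha^2 B^{\top}PB\\
&= -\alpha I+\alpha^2 B^{\top}PB = \alpha\left(-I+\alpha B^{\top}PB\right),
\end{align*}
where the middle equality uses $B^{\top}P+PB=-I$. For $\alpha>0$ the sign of this expression is therefore governed entirely by the bracket $-I+\alpha B^{\top}PB$, which reduces the whole question to a single definiteness condition.

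Next I would bound the quadratic form $B^{\top}PB$. For any unit vector $x$, $x^{\top}B^{\top}PBx=(Bx)^{\top}P(Bx)\le \lambda_{\max}(P)\|Bx\|_2^2\le \lambda_{\max}(P)\lambda_{\max}(B^{\top}B)$, so $B^{\top}PB\preceq \lambda_{\max}(P)\lambda_{\max}(B^{\top}B)\,I$. Consequently, whenever $\alpha<1/[\lambda_{\max}(P)\lambda_{\max}(B^{\top}B)]$, the bracket satisfies $-I+\alpha B^{\top}PB\prec 0$, and multiplying by $\alpha>0$ preserves negative definiteness. Thus $(I+\alpha B)^{\top}P(I+\alpha B)-P\prec 0$, and by the discrete-time Lyapunov stability criterion $I+\alpha B$ is Schur. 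Since this holds for every $\alpha$ in the open interval $(0,1/[\lambda_{\max}(P)\lambda_{\max}(B^{\top}B)])$, the threshold $\alpha^*$ must satisfy $\alpha^*\ge 1/[\lambda_{\max}(P)\lambda_{\max}(B^{\top}B)]$, and the existence of the claimed $\alpha'>0$ follows at once.

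I expect the only real subtlety to be the strictness/boundary bookkeeping: at interior points one needs strict definiteness (which the open interval supplies), while at the endpoint $\alpha\,\lambda_{\max}(P)\lambda_{\max}(B^{\top}B)=1$ one obtains only negative semidefiniteness, which is precisely why the conclusion is the inequality $\alpha^*\ge \cdots$ rather than an equality; note also that $\alpha=0$ itself is excluded since $I$ is not Schur. A secondary point worth stating explicitly is that the $P$ coming from the \emph{continuous}-time Lyapunov equation is a legitimate certificate for the \emph{discrete}-time criterion, and the algebraic identity in the second step is exactly what licenses this transfer.
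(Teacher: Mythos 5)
Your proof is correct and follows essentially the same route as the paper's: both take the continuous-time Lyapunov certificate $P$ solving $B^{\top}P+PB=-I$, expand $(I+\alpha B)^{\top}P(I+\alpha B)-P=-\alpha I+\alpha^{2}B^{\top}PB$, and bound $\lambda_{\max}(B^{\top}PB)\le\lambda_{\max}(P)\lambda_{\max}(B^{\top}B)$ to conclude via the discrete Lyapunov criterion that $I+\alpha B$ is Schur whenever $\alpha<1/[\lambda_{\max}(P)\lambda_{\max}(B^{\top}B)]$. Your explicit handling of the endpoint semidefiniteness and the exclusion of $\alpha=0$ is slightly more careful bookkeeping than the paper's, but the argument is identical in substance.
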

\begin{proof}
If $B$ is Hurwitz stable, then by the Lyapunov argument, there exists a Lyapunov matrix $P \succ 0$ such that ${B^T}P + PB =  - I$~\cite{chen1995linear}.
Next, we have
\begin{align*}
{(I + \alpha B)^{\top}}P(I + \alpha B)
= P - \alpha I + {\alpha ^2}{B^{\top}}PB.
\end{align*}
Then, it is clear that there exists a sufficiently small $\alpha' >0 $ such that
\begin{align}
{ P - \alpha I + {\alpha ^2}{B^{\top}}PB \prec P,\label{eq:8}}
\end{align}
which implies that $I+\alpha B$ is Schur.

Moreover, a sufficient condition for~\eqref{eq:8} to hold is ${\lambda _{\max }}( - \alpha I + {\alpha ^2}{B^{\top}}PB) < 0$.
The left-hand side is bounded as
\begin{align*}
{\lambda _{\max }}( - \alpha I + {\alpha ^2}{B^{\top}}PB)
\le&  - \alpha  + {\alpha ^2}{\lambda _{\max }}(P){\lambda _{\max }}({B^{\top}}B).
\end{align*}
Assuming $\alpha >0$, it is equivalent to 
\begin{align*}
    - 1 + \alpha {\lambda _{\max }}(P){\lambda _{\max }}({B^{\top}}B) < 0.
\end{align*}
\end{proof}
Based on the above two results, we are now ready to establish the convergence of the algorithm~\eqref{eq:system1}.
\begin{theorem}(Convergence)\label{thm:convergence3}
Suppose that $n \geq n^*_3$ so that ${\Phi ^{\top}}{D^\beta }({\gamma ^n}{({P^\pi })^n}-I)\Phi$ is Hurwitz. Then, there exists a positive real number $\alpha^*$ such that for any  and $\alpha \leq \alpha^*$, the iterate in~\eqref{eq:system1} converges to $\theta_*^n$.
\end{theorem}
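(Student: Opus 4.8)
The plan is to reduce the convergence of~\eqref{eq:system1} to the Schur stability of its error-dynamics matrix, and then to invoke the two preparatory results already established. First I would write the update in~\eqref{eq:system1} in explicit linear form. Using the definition of $T^n$ together with the notation $N = \Phi^{\top}D^{\beta}(I - \gamma^n(P^\pi)^n)\Phi$ and $b = \Phi^{\top}D^{\beta}(R^\pi + \gamma P^\pi R^\pi + \cdots + \gamma^{n-1}(P^\pi)^{n-1}R^\pi)$, the increment $\alpha \Phi^{\top}D^{\beta}(T^n(\Phi\theta_k) - \Phi\theta_k)$ equals $\alpha(b - N\theta_k)$, so the iteration becomes the affine recursion $\theta_{k+1} = (I - \alpha N)\theta_k + \alpha b$.

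Next I would subtract the fixed-point relation. Since $\theta_*^n$ solves the $n$-PBE in~\eqref{eq:1}, it satisfies the consistency equation $N\theta_*^n = b$ and is therefore a fixed point of the affine map, $\theta_*^n = (I - \alpha N)\theta_*^n + \alpha b$. Subtracting yields the homogeneous error recursion
\begin{align*}
\theta_{k+1} - \theta_*^n = (I - \alpha N)(\theta_k - \theta_*^n),
\end{align*}
so that $\theta_k - \theta_*^n = (I - \alpha N)^k(\theta_0 - \theta_*^n)$. Convergence from every initial condition is thus equivalent to $(I - \alpha N)^k \to 0$, i.e., to $I - \alpha N$ being Schur stable. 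The key observation here is the sign identity $I - \alpha N = I + \alpha B$, where $B := \Phi^{\top}D^{\beta}(\gamma^n(P^\pi)^n - I)\Phi = -N$ is exactly the matrix appearing in~\cref{lemma:4}.

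Finally I would chain the two earlier results. By hypothesis $n \ge n_3^*$, so~\cref{thm:nd} guarantees that $B$ is Hurwitz (indeed negative definite). Applying~\cref{lemma:4} to this $B$ then produces a positive real number $\alpha^*$---bounded below by $1/(\lambda_{\max}(P)\lambda_{\max}(B^{\top}B))$, where $P \succ 0$ solves $B^{\top}P + PB = -I$---such that $I + \alpha B$ is Schur for all $\alpha \le \alpha^*$. For such $\alpha$ the error contracts to zero and $\theta_k \to \theta_*^n$. There is little genuine difficulty here, since both ingredients are already in place; the only step requiring care is the derivation of the homogeneous error recursion, where one must verify that the constant term $\alpha b$ cancels (this is precisely the relation $N\theta_*^n = b$), so that the dynamics are multiplicative rather than merely affine. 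I would also flag that the correct error recursion has the factor $(\theta_k - \theta_*^n)$ multiplying $(I - \alpha N)$, which appears to be mis-stated in~\eqref{eq:system2}.
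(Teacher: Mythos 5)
Your proposal is correct and takes essentially the same route as the paper: write the update as the affine recursion $\theta_{k+1} = (I-\alpha N)\theta_k + \alpha b$, subtract the fixed point (using $N\theta_*^n = b$) to obtain the homogeneous error dynamics, and then combine \cref{thm:nd} (which gives that $B = -N$ is Hurwitz for $n \ge n_3^*$) with \cref{lemma:4} to conclude that $I - \alpha N$ is Schur for all $\alpha \le \alpha^*$. You are also right that \eqref{eq:system2} is mis-stated in the paper---its right-hand side should read $(I - \alpha N)(\theta_k - \theta_*^n)$, exactly as in your derivation.
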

\begin{proof}
By~\cref{lemma:4}, there exists a sufficiently small $\alpha^* >0 $ such that $I-\alpha N$ is Schur stable.
\end{proof}

In this section, we have proposed a different algorithm in~\eqref{eq:system1} from the classical dynamic programming in Section~\ref{sec:dynamic-programming} and the gradient descent methods in Section~\ref{sec:gradient1}, and analyzed its convergence based on the control system perspectives~\cite{chen1995linear}. All the iterative algorithms studied until now assume that the model is already known.
In the next section, we will study model-free reinforcement learning algorithms based on these algorithms.

\section{Off-policy $n$-step TD-learning}

 For convenience, we consider a sampling oracle that takes the initial state $s_0$, and generates the sequences of states $({s_1},{s_2}, \ldots ,{s_n})$, actions $({a_0},{a_1}, \ldots ,{a_{n - 1}})$, and rewards $({r_1},{r_2}, \ldots ,{r_{n}})$ following the given constant behavior policy $\beta$. In~\cref{algo:TD1}, each states are sampled independently whereas in~\cref{algo:TD1:markov}, the states are generated following a Markov chain induced by the behavior policy $\beta$.


The iterative algorithm in~\eqref{eq:system1} suggests us an off-policy $n$-step TD-learning algorithm ($n$-TD) given in~\cref{algo:TD1}.
Note that~\cref{algo:TD1} can be viewed as a stochastic approximation of~\eqref{eq:system1} by replacing the model parameters with the corresponding samples of the state and action. Moreover, \cref{algo:TD1} can be viewed as a standard off-policy $n$-step TD-learning with the importance sampling methods. In the model-free setting, we can employ an experience replay buffer~\cite{mnih2015human} which stores the samples collected following the behavior policy. From this experience replay buffer, we can then randomly sample, with the distribution closely approximating the stationary distribution induced by the behavior policy.

Moreover, since we are following a behavior policy different from the target policy, we use the importance sampling technique to correct this mismatch in the following manner: $\mathbb{E}\left[\frac{\pi(U\mid S)}{\beta (U\mid S)} X(U) \middle| \beta,S=s \right]= \sum_{a\in\mathcal{A}}\beta(a\mid s)\frac{\pi(a\mid s)}{\beta(a \mid s)} X(a) =\mathbb{E}\left[ X(U) \middle| \pi, S=s \right]$ for $X:\mathcal{A}\to\mathbb{R}$.

\begin{algorithm}[h]
\caption{$n$-step off-policy TD-learning}
\begin{algorithmic}[1]

\State Initialize $\theta_0\in \mathbb{R}^m$.

\For{iteration step $i \in \{0,1,\ldots\}$}

\State Sample $s_0 \sim d^{\beta}$, and sample $({s_1},{s_2}, \ldots ,{s_n})$, $({a_0},{a_1}, \ldots ,{a_{n - 1}})$, and $({r_1},{r_2}, \ldots ,{r_{n}})$ using the sampling oracle.

\State Update parameters according to
\begin{align*}
{\theta _{i + 1}} = {\theta _i} + {\alpha _i}{\rho _{n - 1}}(G - {V_{{\theta _i}}}({s_0})){\varphi(s_0)},
\end{align*}
where $\rho_{n-1} : = \prod\nolimits_{k = 0}^{n - 1} {\frac{{\pi ({a_k}|{s_k})}}{{\beta ({a_k}|{s_k})}}}$ is the importance sampling ratio, $\varphi (s) = {\Phi ^T}{e_s}$ is the $s$-th row vector of $\Phi$, $G = \sum\limits_{k = 0}^{n - 1} {{\gamma ^k}{r_{k+1}}}  + {\gamma ^n}{V_{{\theta _i}}}({s_n})$, and ${V_{{\theta _i}}}(s) = e_s^T\Phi {\theta _i}$.

\EndFor
\end{algorithmic}
\label{algo:TD1}
\end{algorithm}

\begin{algorithm}[h]
\caption{$n$-step off-policy TD-learning : Markovian observation model}
\begin{algorithmic}[1]

\State Initialize $\theta_0\in\mathbb{R}^m$.

\State Sample $(s_0,a_0,s_1,a_1,\dots,s_{n-1})$ following the Markov chain induced by behavior policy $\beta$.
\For{iteration step $i \in \{0,1,\ldots\}$}
\State Sample $a_{n-1+i}\sim\beta(\cdot\mid s_{n-1+i})$ and $s_{n+i} \sim P(\cdot \mid s_{n-1+i},a_{n-1+i})$.
\State Update parameters according to
\begin{align}
{\theta _{i + 1}} = {\theta _i} + {\alpha _i}{\rho _{i}}(G_i - {V_{{\theta _i}}}({s_i})){\varphi(s_i)},\label{eq:td-markov}
\end{align}
where $\rho_{i} : = \prod\nolimits_{k = 0}^{n - 1} {\frac{{\pi ({a_{k+i}}|{s_{k+i}})}}{{\beta ({a_{k+i}}|{s_{k+i}})}}}$ is the importance sampling ratio, $\varphi (s) = {\Phi ^T}{e_s}$ is the $s$-th row vector of $\Phi$, $G_i = \sum\limits_{k = 0}^{n - 1} {{\gamma ^k}{r_{k+i}}}  + {\gamma ^n}{V_{{\theta _i}}}({s_n})$, and ${V_{{\theta _i}}}(s) = e_s^T\Phi {\theta _i}$.

\EndFor
\end{algorithmic}
\label{algo:TD1:markov}
\end{algorithm}

Following the ideas in~\cite{borkar2000ode}, the convergence of~\cref{algo:TD1} can be easily established.
\begin{theorem}\label{thm:convergence5}
Let us consider~\cref{algo:TD1}, and assume that the step-size satisfy
\begin{align}
&\alpha_k>0,\quad \sum_{k=0}^\infty {\alpha_k}=\infty,\quad \sum_{k=0}^\infty{\alpha_k^2}<\infty.\label{eq:step-size-rule}
\end{align}
Then, $\theta_k \to \theta_*^n$ as $k \to \infty$ with probability one for any $n \ge n_3^*$, where $n_3^*$ is given in the statement of~\cref{thm:nd}.
\end{theorem}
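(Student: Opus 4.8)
The plan is to recognize \cref{algo:TD1} as a stochastic approximation recursion and to apply the ODE method of~\cite{borkar2000ode}. First I would rewrite the update in the canonical form
\begin{align*}
\theta_{i+1} = \theta_i + \alpha_i\left(h(\theta_i) + M_{i+1}\right),
\end{align*}
where, writing $\mathcal{F}_i := \sigma(\theta_0,\ldots,\theta_i)$, the drift is $h(\theta) := \mathbb{E}\left[\rho_{n-1}(G - V_\theta(s_0))\varphi(s_0)\right]$ with the expectation taken over one fresh draw of the sampling oracle, and $M_{i+1}$ is the corresponding centered residual. Since each iteration uses independent samples, $M_{i+1}$ is a martingale-difference sequence with respect to $\{\mathcal{F}_i\}$. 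The crucial computation is to evaluate $h(\theta)$: expanding $G - V_\theta(s_0) = \sum_{k=0}^{n-1}\gamma^k r_{k+1} + \gamma^n\varphi(s_n)^\top\theta - \varphi(s_0)^\top\theta$ and applying the importance-sampling identity stated just before \cref{algo:TD1} to turn each $\beta$-expectation into a $\pi$-expectation, one has $\mathbb{E}_\pi[r_{k+1}\mid s_0] = [(P^\pi)^k R^\pi](s_0)$ and $\mathbb{E}_\pi[\varphi(s_n)\mid s_0] = [(P^\pi)^n\Phi]_{s_0,:}^\top$. Averaging over $s_0\sim d^\beta$ then collapses these sums to $h(\theta) = b - N\theta = -N(\theta - \theta_*^n)$, with $b$ and $N$ as defined in \cref{sec:system} and $N\theta_*^n = b$ by~\eqref{eq:solution1}.

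With the drift in hand, the limiting ODE is $\dot\theta = -N(\theta - \theta_*^n)$, precisely the continuous-time analogue of the deterministic iteration~\eqref{eq:system1}. Because $n \ge n_3^*$, \cref{thm:nd} guarantees that $\Phi^\top D^\beta(\gamma^n(P^\pi)^n - I)\Phi = -N$ is negative definite and Hurwitz, so $\theta_*^n$ is the unique globally asymptotically stable equilibrium; the Lyapunov function $V(\theta) = \tfrac12\|\theta - \theta_*^n\|_2^2$ has derivative $(\theta-\theta_*^n)^\top(-N)(\theta-\theta_*^n) < 0$ along the flow. For the boundedness requirement of the Borkar--Meyn framework I would form the scaled vector field $h_\infty(\theta) := \lim_{c\to\infty} h(c\theta)/c = -N\theta$, which exists because $h$ is affine, and the scaled ODE $\dot\theta = -N\theta$ inherits global asymptotic stability of the origin from the Hurwitzness of $-N$.

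It remains to check the regularity hypotheses. The drift $h$ is affine, hence globally Lipschitz. For the noise, I would bound $\|\rho_{n-1}(G - V_\theta(s_0))\varphi(s_0)\|_2$ using the boundedness of the rewards, the uniform bound on the features afforded by the finite state space, and the boundedness of the importance-sampling ratio $\rho_{n-1}$ (finite $\mathcal{A}$ with $\beta$ covering $\pi$); this yields a bound of the form $C_1 + C_2\|\theta\|_2$, so that $\mathbb{E}[\|M_{i+1}\|_2^2 \mid \mathcal{F}_i] \le K(1 + \|\theta_i\|_2^2)$. Combined with the step-size rule~\eqref{eq:step-size-rule}, these are exactly the conditions required by the theorem of~\cite{borkar2000ode}, which first gives almost-sure boundedness of $\{\theta_i\}$ and then convergence $\theta_i \to \theta_*^n$ with probability one. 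I expect the main obstacle to lie in the drift computation: correctly threading the off-policy importance-sampling correction so that the $\beta$-sampled trajectory reproduces the $\pi$-dynamics $(P^\pi)^k$ and reassembles into exactly $b - N\theta$. Once this identification is secured, the stability input supplied by \cref{thm:nd} renders the remaining verification routine.
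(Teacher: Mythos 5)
Your proposal is correct and follows essentially the same route as the paper: the paper's proof likewise identifies the ODE model $\dot\theta_t = \Phi^\top D^\beta(T^n(\Phi\theta_t) - \Phi\theta_t)$, invokes \cref{thm:nd} for $n \ge n_3^*$ to get Hurwitz stability (hence global asymptotic stability of the affine ODE), and concludes via the Borkar--Meyn theorem of~\cite{borkar2000ode}. The only difference is one of detail: you explicitly verify the hypotheses the paper leaves implicit (the martingale-difference noise structure, the drift computation $h(\theta) = b - N\theta$ via the importance-sampling identity, the scaled vector field $h_\infty(\theta) = -N\theta$, and the conditional second-moment bound on the noise), all of which are carried out correctly.
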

\begin{proof}
The so-called O.D.E. model of~\cref{algo:TD1} is
\begin{align}
{{\dot \theta }_t} = {\Phi ^{\top}}{D^\beta }({T^n}(\Phi {\theta _t}) - \Phi {\theta _t}). \label{eq:ODE1}
\end{align}
By~\cref{thm:nd}, for $n \ge n_3^*$, $\Phi ^{\top} D^\beta (\gamma ^n (P^\pi)^n-I)\Phi$ is Hurwitz, and hence,~\eqref{eq:ODE1} is globally asymptotically stable. Then, the proof is completed by using of Borkar and Meyn theorem in~\cite[Thm.~2.2]{borkar2000ode}. 
\end{proof}

\cref{thm:convergence5} tells us that $n$-TD can solve the policy evaluation problem with a sufficiently large $n$.In other words, it can resolve the deadly triad problem in the case of linear function approximation.

A more practical scenario than the i.i.d. model is to consider the Markov chain $\{ (s_i,a_i) \}_{i\in\mathbb{N}}$, a trajectory of the state-action pairs following a behavior policy $\beta$. At each time $i\in\mathbb{N}$, the agent at state $s_i\in\mathcal{S}$ selects an action, $a_i\sim \beta(\cdot\mid s_i)$, and transits to the next state $s_{i+1}\sim P(\cdot\mid s_i,a_i)$. As stated in Algorithm~\ref{algo:TD1:markov}, we use the sequence of samples $(s_i,a_i,s_{i+1},a_{i+1},\dots,s_{i+n-1},a_{i+n-1},s_{i+n})$ to update $\theta_i$ via~\eqref{eq:td-markov}.

Throughout the paper, we will assume that the sequence of states observed following the behavior policy $\beta$ forms an irreducible Markov chain, i.e., there exists $k\in\mathbb{N}$ such that $\mathbb{P}[s_k=s^{\prime}|s_0=s]>0$ for any $s,s^{\prime}\in\mathcal{S}$. It also follows that the sequence of 
\begin{align}
 \tau_j:=(s_{j-1},a_{j-1}, s_j,a_j,\dots,s_{j+n-2},a_{j+n-2},s_{j+n-1} ),\label{eq:tau}  
\end{align}
 for $j\in \mathbb{N}$, which is a collection of state-action pairs observed within $n$-steps starting from the state $s_j$, also forms an irreducible Markov chain. 

Now, we can use a version of Borkar and Meyn Theorem~\cite{liu2024ode}, which applies to the Markovian observation model, to prove the convergence of Algorithm~\ref{algo:TD1:markov}.

\begin{theorem}\label{thm:markov-proof}
    Let us consider~\cref{algo:TD1:markov} and assume that the step-size $\alpha_k$ satisfies~\eqref{eq:step-size-rule} and $\lim_{k \to \infty }\left( {\frac{1}{\alpha_{k + 1}} - \frac{1}{\alpha_k}} \right)$ exists and is finite. Moreover, let us assume that Markov chain induced by behavior policy $\beta$ is irreducible. Then, $\theta_k \to \theta_*^n$ as $k \to \infty$ with probability one for any $n \ge \bar n^*$, where $\bar n^*$ is given in the statement of~\cref{thm:nd}.
\end{theorem}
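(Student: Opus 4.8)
The plan is to mirror the argument of \cref{thm:convergence5} but to invoke the Markovian version of the Borkar--Meyn theorem from~\cite{liu2024ode} in place of the i.i.d.\ one. First I would write the stochastic update~\eqref{eq:td-markov} in the standard stochastic-approximation form $\theta_{i+1} = \theta_i + \alpha_i h(\theta_i, \tau_i)$, where $\tau_i$ is the length-$n$ window of state--action pairs defined in~\eqref{eq:tau} and $h(\theta, \tau_i) = \rho_i (G_i - V_\theta(s_i))\varphi(s_i)$. Because $\mathcal{S}$ and $\mathcal{A}$ are finite, the rewards are bounded, and the importance ratios $\rho_i$ are bounded, $h(\cdot, \tau)$ is affine in $\theta$ with uniformly bounded coefficients, and hence globally Lipschitz in $\theta$ uniformly in $\tau$.

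The key computation is to identify the averaged vector field. Under the irreducibility assumption the window chain $\{\tau_j\}$ is an irreducible Markov chain on a finite space, hence admits a unique stationary distribution $\mu$ whose marginal on the leading state $s_i$ is the stationary distribution $d^\beta$. Taking the expectation of $h(\theta, \tau_i)$ under $\mu$ and applying the importance-sampling identity recursively along the $n$-step window converts the behavior-policy $n$-step return into the target-policy return, so that $\mathbb{E}_\mu[\rho_i G_i \mid s_i = s] = [T^n(\Phi\theta)](s)$. Consequently the averaged field is exactly $\bar h(\theta) = \Phi^\top D^\beta(T^n(\Phi\theta) - \Phi\theta)$, the right-hand side of the O.D.E.~\eqref{eq:ODE1} used in the i.i.d.\ proof; the associated O.D.E.\ is therefore $\dot\theta_t = \Phi^\top D^\beta(T^n(\Phi\theta_t) - \Phi\theta_t)$, an affine system with system matrix $\Phi^\top D^\beta(\gamma^n(P^\pi)^n - I)\Phi$ and unique equilibrium $\theta_*^n$.

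It then remains to verify the hypotheses of the Markovian theorem in~\cite{liu2024ode}. By~\cref{thm:nd}, for all $n \ge \bar n^*$ the system matrix is Hurwitz, so the averaged O.D.E.\ is globally asymptotically stable at $\theta_*^n$; moreover, since $\bar h$ is affine, the scaled limiting field $\bar h_\infty(\theta) := \lim_{c\to\infty} \bar h(c\theta)/c = \Phi^\top D^\beta(\gamma^n(P^\pi)^n - I)\Phi\,\theta$ inherits the same Hurwitz matrix, so the limiting O.D.E.\ $\dot\theta = \bar h_\infty(\theta)$ is globally asymptotically stable at the origin---this supplies the boundedness/stability condition required in the Markovian setting. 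The ergodicity condition follows from irreducibility on the finite space, the Lipschitz and moment conditions follow from the boundedness noted above, and the step-size conditions, together with the extra requirement that $\lim_{k\to\infty}(1/\alpha_{k+1} - 1/\alpha_k)$ exist and be finite, are assumed in the statement. With all hypotheses in place,~\cite{liu2024ode} yields $\theta_k \to \theta_*^n$ almost surely.

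The main obstacle I anticipate is not the stability analysis---this is handed to us by~\cref{thm:nd}---but rather the careful verification that the stationary average of the Markovian update reproduces $\bar h$, since unlike the i.i.d.\ case the state $s_i$ entering the update is drawn from the trajectory rather than freshly from $d^\beta$. The argument hinges on the marginal of the stationary distribution of $\{\tau_j\}$ being $d^\beta$ together with the tower property of the importance-sampling correction along the $n$-step window. Confirming that the correlated-noise mixing hypotheses of~\cite{liu2024ode} are met (rather than being able to quote the simpler i.i.d.\ theorem) is the principal bookkeeping burden, but the finiteness of $\mathcal{S}$ and $\mathcal{A}$ keeps each such check routine.
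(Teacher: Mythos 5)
Your proposal matches the paper's proof essentially step for step: both cast the update~\eqref{eq:td-markov} as a stochastic approximation driven by the window chain $\{\tau_j\}$, identify the averaged field as $\Phi^\top D^\beta(T^n(\Phi\theta)-\Phi\theta)$ with scaled limit $\Phi^\top D^\beta(\gamma^n(P^\pi)^n-I)\Phi\,\theta$, use~\cref{thm:nd} to supply the Hurwitz/global-stability conditions, and invoke the Markovian Borkar--Meyn result of~\cite{liu2024ode} (the paper's~\cref{lem:borkar-markov}), with the Lipschitz and rescaling conditions discharged by the affine structure and the irreducibility of the finite window chain exactly as you indicate. The only cosmetic difference is that the paper explicitly writes out the per-sample rescaled maps $h_c$ and $h_\infty$ and the stationary expectations $\mathbb{E}[\varphi(s_0)\varphi(s_0)^\top]=\Phi^\top D^\beta\Phi$ and $\mathbb{E}[\rho\,\varphi(s_0)\varphi(s_n)^\top]=\Phi^\top D^\beta(P^\pi)^n\Phi$, which your sketch covers implicitly.
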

The proof is given in~\cref{app:proof:thm:markov-proof}. \cref{thm:markov-proof} indicates that $n$-step TD is guaranteed to converge under more practical Markovian observation model.

\begin{remark}
    The framework can be easily extended to the eposidic MDP as follows: an episodic MDP can be interpreted as a non-episodic MDP by introducing the terminal state. In this case, the MDP is fundamentally a non-episodic MDP but when visiting the terminal state, all the remaining rewards become zero. Therefore, in this setting we can implement the $n$-step TD as follows: 1) when the episode ends before $n$ time steps, then we can use the return without using the bootstrapping 2) when the episode did not end before $n$ time steps, then we can use the bootstrapping after $n$ time steps.
\end{remark}

\begin{remark}
    We note that the bound in~\cref{thm:nd} is only a sufficient condition, and as long as $\Phi^{\top}(\gamma^n (P^{\pi})^n-I)D^{\beta})\Phi$ is Hurwitz, the algorithm will converge.
\end{remark}

\section{Conclusion}

In this paper, we have investigated the convergence and properties of $n$-step TD-learning algorithms. We have proved that under the deadly triad scenario, the $n$-step TD-learning algorithms converge to useful solutions as the sampling horizon $n$ increases sufficiently. We have comprehensively examined the fundamental properties of their model-based deterministic counterparts, which can be viewed as prototype deterministic algorithms whose analysis plays a pivotal role in understanding and developing their model-free RL counterparts. Based on the analysis and insights from the deterministic algorithms, we have established convergence of two $n$-step TD-learning algorithms. 
\bibliographystyle{plain}
\bibliography{reference}

\appendix

\section{Derivation of main results}

\begin{lemma}\label{lemma:Schur-eq}
Let us consider an affine mapping $h:{\mathbb R}^m \to {\mathbb R}^m$ defined as $h(x)= Bx +b$, where $B\in {\mathbb R}^{m\times m}$ and $b\in {\mathbb R}^m$ are constants. Then, $h$ is a contraction if and only if $B$ is Schur.
\end{lemma}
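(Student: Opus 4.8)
The plan is to observe that the affine offset $b$ plays no role in the contraction property, and to reduce the statement to an induced-norm condition on $B$. Since $h(x)-h(y) = (Bx+b)-(By+b) = B(x-y)$, the displacement map is linear, so setting $v = x-y$ the map $h$ is a contraction with respect to a norm $\|\cdot\|$ and factor $c\in[0,1)$ precisely when $\|Bv\| \le c\|v\|$ for all $v \in {\mathbb R}^m$, i.e.\ when the induced operator norm satisfies $\|B\| \le c < 1$. Thus I would first record that ``$h$ is a contraction'' is equivalent to ``there exists a norm on ${\mathbb R}^m$ whose induced operator norm gives $\|B\| < 1$'', after which it only remains to show that this last condition is equivalent to $B$ being Schur. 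I would also state at the outset that ``contraction'' is understood here as contraction with respect to \emph{some} norm (equivalently, some induced metric), matching the usage elsewhere in the paper, where $\Pi T^n$ is said to be ``a contraction with respect to some $\|\cdot\|$''.

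For the ``only if'' direction, I would assume $h$ is a contraction, which supplies a norm with $\|B\| < 1$. Invoking the standard fact that the spectral radius is dominated by every induced matrix norm, $\rho(B) \le \|B\| < 1$, so every eigenvalue of $B$ lies strictly inside the unit disk and $B$ is Schur.

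For the ``if'' direction, I would assume $B$ is Schur, so $\rho(B) < 1$. The key tool is the classical result (Horn and Johnson, Lemma~5.6.10) that for every $\varepsilon > 0$ there exists an induced matrix norm $\|\cdot\|$ with $\|B\| \le \rho(B) + \varepsilon$. Choosing $\varepsilon$ small enough that $\rho(B) + \varepsilon < 1$ yields a norm with $\|B\| < 1$, whence $h$ is a contraction with factor $\rho(B)+\varepsilon$ in that norm.

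The only delicate point — and the main thing to cite rather than grind out — is this existence of an induced norm approximating the spectral radius from above; everything else is bookkeeping with the inequality $\rho(B)\le\|B\|$ and the cancellation of $b$. I would emphasize that this is precisely the device reused later (for instance in the proof of \cref{lemma:nonsingular}, where the existence of a matrix norm with $\|A\|<1$ is asserted), so isolating it cleanly here pays off downstream.
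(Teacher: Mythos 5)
Your proof is correct, and it shares the paper's overall skeleton---cancel $b$ so that $h(x)-h(y)=B(x-y)$, characterize contraction as the existence of a vector norm whose induced operator norm gives $\|B\|<1$, and use $\rho(B)\le\|B\|$ for the easy direction---but the sufficiency direction rests on a different key lemma. The paper constructs the norm via the discrete-time Lyapunov equation: since $B$ is Schur there exists $P\succ 0$ with $B^{\top}PB\preceq \alpha P$, $\alpha<1$, and the quadratic norm $\|x\|_P=\sqrt{x^{\top}Px}$ then makes $x\mapsto Bx$ a contraction. You instead invoke Horn and Johnson's Lemma~5.6.10, which for every $\varepsilon>0$ supplies an induced norm with $\|B\|\le\rho(B)+\varepsilon$, and you choose $\varepsilon$ so that $\rho(B)+\varepsilon<1$. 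Both are standard and fully rigorous; your route is shorter, gives the essentially optimal contraction factor $\rho(B)+\varepsilon$, and, as you observe, directly delivers the fact ``there exists a matrix norm with $\|A\|<1$'' that the paper re-derives inside the proof of \cref{lemma:nonsingular}. The paper's Lyapunov construction pays a dividend elsewhere: the explicit matrix $P$ is the same device reused in \cref{lemma:4} to bound the admissible step-size, so it fits the paper's control-theoretic toolkit, whereas your citation keeps this lemma self-contained. You are also right to flag the one delicate point, namely that the norm from Lemma~5.6.10 must be \emph{induced} by a vector norm for the contraction statement to make sense---its construction (a scaled similarity transform of a standard induced norm) indeed provides this. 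Your necessity argument, stated directly via $\rho(B)\le\|B\|<1$ rather than by the paper's contradiction, is the same in substance.
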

\begin{proof}
First of all, for any norm $||\cdot||$, since $||h(x)-h(y)|| = ||Bx+b-By-b||=||Bx-By||$, one can prove that $h$ is a contraction if and only if $g(x):= Bx$. Therefore, let us focus on $g$, and for sufficiency, let us suppose that $B$ is Schur, which means $\rho(B)< 1$, where $\rho$ is the spectral radius. By Lyapunov theorem~\cite{chen1995linear}, $B$ is Schur if and only if there exists a symmetric positive definite matrix $P \succ 0$ such that $B^\top P B \preceq \alpha P$ and $\sqrt {\lambda_{\max}(P)} = 1$, where $\alpha <1$. One can easily prove that this implies $\sqrt {\lambda _{\max}(B^ \top PB)}  \le \alpha \sqrt {\lambda_{\max}(P)}  = \alpha$. By defining the matrix norm $\| B \| = \sqrt {\lambda_{\max }(B^ \top P B)}$, we have $\| B \| \le \alpha < 1$. For such a norm, it follows that $||g(x)-g(y)|| = ||Bx-By|| \le ||B||\cdot ||x-y|| \le \alpha ||x-y||$, where $||\cdot ||$ denotes both the matrix and vectors norms that are compatible with each other. This implies that $g$ is a contraction. {In order to prove the necessity part}, let us suppose that $g$ is a contraction but $B$ is not Schur. Then, for any matrix norm $\|\cdot \|$, we should have $||B||\geq\rho(B)\geq 1$, where $\rho$ means the spectral radius. On the other hand, by the hypothesis, one can choose a vector norm $\|\cdot \|$ such that $||g(x)-g(y)|| = ||Bx-By|| \le \alpha ||x-y||$ with $\alpha <1$, which leads to $\frac{||B(x - y)||}{||x - y||} \le \alpha$. This implies that the corresponding induced matrix norm $\|\cdot \|$ should satisfy $||B|| \le \alpha  < 1$, which is a contradiction. Therefore, $B$ is Schur, completing the proof.
\end{proof}
\begin{lemma}\label{lemma:Schur-eq2}
Let us consider an affine mapping $h:{\mathbb R}^m\to {\mathbb R}^n$ defined as $h(x)= Bx +b$, where $B\in {\mathbb R}^{m\times m}$ and $b\in {\mathbb R}^m$ are constants. Then, $h$ has a unique fixed point and the iterate $x_{k+1} = h(x_k), k\in \{0,1,\ldots \}$ converges to the fixed point if and only if $h$ is a contraction.
\end{lemma}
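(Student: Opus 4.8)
The plan is to collapse the whole equivalence onto the single condition that $B$ is Schur, and then quote \cref{lemma:Schur-eq}, which already equates the contraction property of $h$ with $B$ being Schur. Thus it suffices to prove that the compound statement ``$h$ has a unique fixed point and the iterates $x_{k+1}=h(x_k)$ converge to it (from every initialization $x_0$)'' is itself equivalent to $B$ being Schur. The entire argument then rests on the elementary error recursion obtained by subtracting the fixed-point relation from the update.

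For the direction in which $h$ is a contraction, I would argue as follows. By \cref{lemma:Schur-eq}, $h$ being a contraction means $B$ is Schur, and the contraction holds with respect to the norm furnished in the proof of \cref{lemma:Schur-eq}. Since $\mathbb{R}^m$ equipped with any norm is complete, the Banach fixed point theorem immediately yields a unique fixed point $x_*$ together with $x_k \to x_*$ for every $x_0$, establishing both halves of the conjunction at once. (Equivalently, $\rho(B)<1$ makes $I-B$ invertible, so $x_* = (I-B)^{-1}b$ is the unique solution of $x=Bx+b$, with convergence supplied by the next step.)

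For the converse, suppose $h$ has a unique fixed point $x_*$ and the iterates converge to it. Setting $e_k := x_k - x_*$ and subtracting $x_* = Bx_* + b$ from $x_{k+1}=Bx_k+b$ gives the linear recursion $e_{k+1} = B e_k$, hence $e_k = B^k e_0$. Because $x_0$ is arbitrary, $e_0 = x_0 - x_*$ ranges over all of $\mathbb{R}^m$, so the convergence hypothesis forces $B^k e_0 \to 0$ for every $e_0$, i.e.\ $B^k \to 0$. This is equivalent to $\rho(B)<1$, so $B$ is Schur, and \cref{lemma:Schur-eq} then returns that $h$ is a contraction.

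The only point requiring care (and the main obstacle) is the meaning of ``the iterate converges to the fixed point'': the converse genuinely needs convergence from every initial condition, since convergence along a single trajectory whose $e_0$ happens to lie in the stable subspace of $B$ would not force $\rho(B)<1$. I would therefore state explicitly that convergence is understood globally, and justify the final implication $B^k e_0 \to 0\ \forall e_0 \Rightarrow \rho(B)<1$ by the standard eigenvector argument: an eigenvalue $\lambda$ with $|\lambda|\ge 1$ would produce, via its eigenvector (or the real and imaginary parts thereof in the complex case), an $e_0$ with $\|B^k e_0\|\not\to 0$, a contradiction. Everything else is routine.
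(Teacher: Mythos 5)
Your proposal is correct and follows essentially the same route as the paper's proof: sufficiency via the Banach fixed point theorem, and necessity by subtracting the fixed-point relation to obtain the error recursion $e_{k+1}=Be_k$, concluding Schurness from convergence and then invoking \cref{lemma:Schur-eq}. The only difference is that you spell out explicitly what the paper delegates to ``standard linear system theory''---namely that convergence must hold from every initialization and that $B^k e_0 \to 0$ for all $e_0$ forces $\rho(B)<1$ via the eigenvector argument---which is a welcome clarification but not a different proof.
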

\begin{proof}
For the sufficiency, suppose that $h$ is a contraction. Then, the proof is completed by the Banach fixed point theorem.
For the necessity, suppose that $h$ has a unique fixed point $x^*$ and the iterate $x_{k+1} = h(x_k), k\in \{0,1,\ldots \}$ converges to the fixed point. Then, $h(x^*)= Bx^* + b = x^*$, and defining the shifted iterate $y_k = x_k - x^*$, one gets $y_{k+1} = B y_k, k\in \{0,1,\ldots \}$, where $y_k \to 0$ as $k \to \infty$. By the standard linear system theory~\cite{chen1995linear}, this implies that $B$ is Schur.
Then,~\cref{lemma:Schur-eq} leads to the conclusion that $h$ is a contraction. 
\end{proof}

\begin{remark}
Note that Banach fixed point theorem is a sufficient condition in general. \cref{lemma:Schur-eq2} tells us that it is also a necessary condition for affine mappings. The sufficiency part also follows from Proposition 2.1.1 in~\cite{bertsekas2022abstract}.
\end{remark}

\section{O.D.E. approach for stochastic approximation}
Let us suppose that $\{Y_k\in\mathcal{Y}\}_{k\in\mathbb{N}}$ is a stochastic process induced by a irreducible Markov chain with a unique stationary distribution $\mu$ and finite space $\mathcal{Y}$. In this subsection, we study convergence of the stochastic recursion 
\begin{align}
    x_{k+1} = x_k +\alpha_k h(x_k,Y_k),\label{eq:sa}
\end{align}
where $h:\mathbb{R}^m\times\mathcal{Y}\to\mathbb{R}^m$ is a mapping, and $k\in\mathbb{N}$ is the iteration step. Convergence of the above update is closely related to the corresponding ODE
\begin{align}
\frac{d}{dt}x_t = \bar{h}(x_t),\quad t\in\mathbb{R}_+, \quad x_0 \in \mathbb{R}^m,\label{eq:ode2}
\end{align}
where $\bar{h}(x) = \mathbb{E}_{Y\sim \mu}[h(x,Y)]$. An important concept studying the convergence of~(\ref{eq:sa}) is the so-called re-scaled map : $h_c(x,y) := \frac{h(cx,y)}{c}$, where $x\in\mathbb{R}^m$ and $y\in\mathcal{Y}$. Moreover, let us denote the limit of the re-scaled map, if exists, as 
\begin{align}
h_{\infty}(x,y):=\lim_{c\to\infty}h_c(x,y). \label{eq:h1}
\end{align}
Likewise, let us define the re-scaled mapping and the corresponding limit, if exists, as
\begin{align}
\bar{h}_c(x):=\frac{\bar{h}(cx)}{c},\quad \bar{h}_{\infty}(x):=\lim_{c\to\infty}\bar{h}_c(x). \label{eq:h2}
\end{align}
We introduce essential assumptions required to guarantee the convergence of the stochastic algorithm~\eqref{eq:sa}.
\begin{assumption}\label{assmp:borkar-meyn-markov}
$\,$
    \begin{enumerate}
    \item[1)] The limits defined in~\eqref{eq:h1} and~\eqref{eq:h2} exist.
        \item[2)] There exist functions $b:\mathbb{R}^m\times\mathcal{Y}\to\mathbb{R}$ and $\kappa:\mathbb{R}_+\to\mathbb{R}$ such that for $c\in\mathbb{R}_+$:
        \begin{align*}
         h_c(x,y)-h_{\infty}(x,y) =& \kappa(c) b(x,y),
        \end{align*}
        and $\lim_{c\to\infty}\kappa(c)=0$. Furthermore, there exists a function $L_b:\mathcal{Y}\to\mathbb{R}$ such that $\mathbb{E}_{y\sim\mu}[L_b(y)]$ is finite and $||b(x,y)-b(x^{\prime},y)||\leq L_b(y)||x-x^{\prime}||$.
        \item[3)] The mapping $h:\mathbb{R}^m\to\mathbb{R}^m$ is Lipschitz continuous:
        \begin{align*}
            ||h(x,y)-h(x^{\prime},y)|| \leq & L(y) ||x-x^{\prime}||,\\
            ||h_{\infty}(x,y)-h_{\infty}(x^{\prime},y)|| \leq & L(y) ||x-x^{\prime}||,
        \end{align*}
        where $L:\mathcal{Y}\to\mathbb{R}$, and the quantities, $\bar{h}(x)$, $\bar{h}_{\infty}(x)$ and $ L = \mathbb{E}_{y\sim\mu}[L(y)]$ are finite for any $x\in\mathbb{R}^d$.
        \item[4)] The re-scaled mapping $\bar{h}_c(x)$ converges uniformly to $\bar{h}_{\infty}(x)$ on any compact subsets of $\mathbb{R}^m$. Moreover, the origin of the following ODE is globally asymptotically stable:
        \begin{align*}
            \frac{d}{dt}x_t = \bar{h}_{\infty}(x_t),\quad x_0\in\mathbb{R}^m.
        \end{align*}
        \item[5)] The ODE in~\eqref{eq:ode2} has a unique globally asymptotically stable equilibrium point $x^*\in\mathbb{R}^m$ such that $\bar{h}(x^*)=0$.
        \item[6)] The step-size condition satisfies~(\ref{eq:step-size-rule}) and
        \begin{align*}
         \lim_{k\to\infty}\left(\frac{1}{\alpha_{k+1}}-\frac{1}{\alpha_k}\right)
        \end{align*}
 exists and is finite.
        \item[7)] The stochastic process $\{Y_i\}_{i\in\mathbb{N}}$ is generated by a irreducible Markov chain.
    \end{enumerate}
\end{assumption}

\begin{remark}
     $\{Y_i\}_{i\in\mathbb{N}}$ does not need to be necessarily aperiodic. That is, we only require the Markov chain induced by the behavior policy $\beta$ to be irreducible. 
\end{remark}

\begin{lemma}[Corollary 1 in~\cite{liu2024ode}]\label{lem:borkar-markov}
   Suppose that~\cref{assmp:borkar-meyn-markov} holds. Then $x_k$ converges to $x^*$ with probability one, where $x^*$ is defined in the item (5) of~\cref{assmp:borkar-meyn-markov}.
\end{lemma}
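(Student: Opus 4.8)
The plan is to recast the recursion~\eqref{eq:td-markov} as a Markovian stochastic approximation of the form~\eqref{eq:sa} and then invoke~\cref{lem:borkar-markov}. Taking the driving noise to be $Y_i := \tau_i$ from~\eqref{eq:tau} --- the length-$n$ window of state--action pairs that, as noted just before the theorem, forms an irreducible Markov chain on a finite space --- the update reads $\theta_{i+1} = \theta_i + \alpha_i h(\theta_i, Y_i)$ with $h(\theta, Y) = \rho_i (G_i - V_\theta(s_i))\varphi(s_i)$. Since $V_\theta(s) = \varphi(s)^\top \theta$, this map is affine in $\theta$: I would write $h(\theta, Y) = A(Y)\theta + b(Y)$ with $A(Y) = \rho_i \varphi(s_i)(\gamma^n \varphi(s_{i+n}) - \varphi(s_i))^\top$ and $b(Y) = \rho_i (\sum_{k=0}^{n-1}\gamma^k r_{k+i})\varphi(s_i)$. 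The whole proof then reduces to checking the seven items of~\cref{assmp:borkar-meyn-markov} for this affine map.

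First I would identify the averaged map $\bar h(\theta) = \mathbb{E}_{Y\sim\mu}[h(\theta, Y)]$. Because the stationary distribution $\mu$ of the $\tau$-chain has $s_i$-marginal $d^\beta$ and generates the remaining transitions under $\beta$, the importance-sampling ratio $\rho_i$ converts the behavior-policy conditional expectation of the $n$-step return into the target-policy one, yielding $\bar h(\theta) = \Phi^\top D^\beta (T^n(\Phi\theta) - \Phi\theta)$. This is exactly the right-hand side of the i.i.d.\ ODE~\eqref{eq:ODE1} used in~\cref{thm:convergence5}; substituting the definition of $T^n$ gives the affine form $\bar h(\theta) = N'\theta + b$ with $N' = \Phi^\top D^\beta(\gamma^n (P^\pi)^n - I)\Phi$ and $b = \Phi^\top D^\beta \sum_{k=0}^{n-1}\gamma^k (P^\pi)^k R^\pi$. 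By~\cref{thm:nd}, for every $n \ge \bar n^*$ the matrix $N'$ is negative definite, hence Hurwitz, so $N'$ is invertible and the unique equilibrium $-N'^{-1}b$ coincides with $\theta_*^n$ in~\eqref{eq:solution1}.

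With these two maps in hand, the verification of~\cref{assmp:borkar-meyn-markov} is mostly a consequence of affinity. The re-scaled limits in~\eqref{eq:h1}--\eqref{eq:h2} exist and are linear, $h_\infty(\theta, Y) = A(Y)\theta$ and $\bar h_\infty(\theta) = N'\theta$ (item 1); the residual $h_c - h_\infty = b(Y)/c$ gives $\kappa(c) = 1/c \to 0$ with the $\theta$-independent $b(Y)$, whose Lipschitz constant in $\theta$ is zero (item 2); boundedness of $\rho_i$ --- which uses the standard coverage condition $\beta(a\mid s) > 0$ whenever $\pi(a\mid s) > 0$ --- together with boundedness of $\varphi$ over the finite space makes $\|A(Y)\|$ a bounded, hence integrable, Lipschitz constant for both $h$ and $h_\infty$, and renders $\bar h, \bar h_\infty$ finite (item 3); $\bar h_c - \bar h_\infty = b/c$ converges uniformly on compacts and the origin of $\dot\theta = N'\theta$ is globally asymptotically stable by Hurwitzness (item 4); the full ODE $\dot\theta = N'\theta + b$ has the unique globally asymptotically stable equilibrium $\theta_*^n$ (item 5); and items 6 and 7 are the step-size hypothesis~\eqref{eq:step-size-rule} and the irreducibility of the $\tau$-chain, both given. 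Invoking~\cref{lem:borkar-markov} then yields $\theta_k \to \theta_*^n$ with probability one.

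The main obstacle is the averaging step --- establishing rigorously that the importance-sampling weights telescope the behavior-policy window expectation into the target-policy $n$-step Bellman residual $\Phi^\top D^\beta(T^n(\Phi\theta) - \Phi\theta)$, including the coverage condition on $\beta$ needed for $\rho_i$ to be well defined and bounded. Everything else is routine once affinity is exploited: the ``stability at infinity'' requirement of item 4 --- usually the delicate part of the Borkar--Meyn framework --- is immediate here because $h$ is affine, so $h_\infty$ is linear and inherits the Hurwitz stability of $N'$ supplied by~\cref{thm:nd}.
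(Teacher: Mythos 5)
There is a genuine gap here, and it is structural: you have proven the wrong statement. The statement in question is \cref{lem:borkar-markov} itself --- the general convergence result for the Markovian stochastic approximation~\eqref{eq:sa} under \cref{assmp:borkar-meyn-markov}, which the paper imports as a black box (Corollary 1 of the cited work of Liu et al.). A proof of this lemma would have to reproduce the Borkar--Meyn machinery in the Markovian-noise setting: (i) establish stability (almost-sure boundedness) of the iterates $\{x_k\}$ via the rescaled dynamics $\frac{d}{dt}x_t = \bar h_\infty(x_t)$ and the global asymptotic stability of its origin, which is what items (1), (2), (4), and (6) of \cref{assmp:borkar-meyn-markov} are for; (ii) control the Markovian noise, which is \emph{not} a martingale-difference sequence, typically through a Poisson-equation or averaging decomposition made possible by irreducibility of $\{Y_k\}$ on a finite space (item (7)) together with the Lipschitz conditions (items (2), (3)); and (iii) run an ODE-tracking argument showing that the linearly interpolated iterates shadow solutions of~\eqref{eq:ode2} on shrinking time windows, so that they are drawn into the unique globally asymptotically stable equilibrium $x^*$ of item (5). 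None of this appears in your proposal.

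What you actually wrote is, almost line for line, a proof of \cref{thm:markov-proof}: you recast the TD update~\eqref{eq:td-markov} in the form~\eqref{eq:sa} with $Y_i = \tau_i$ from~\eqref{eq:tau}, compute $\bar h(\theta) = \Phi^\top D^\beta(T^n(\Phi\theta) - \Phi\theta)$, identify the linear rescaled limits $h_\infty$ and $\bar h_\infty(\theta) = \Phi^\top D^\beta(\gamma^n (P^\pi)^n - I)\Phi\,\theta$ with the Hurwitz property supplied by \cref{thm:nd}, verify the seven items, and then \emph{invoke} \cref{lem:borkar-markov}. As a proof of \cref{lem:borkar-markov} this is circular --- you cannot appeal to the lemma to establish the lemma. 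To be fair, as a proof of \cref{thm:markov-proof} your argument is sound and mirrors the paper's appendix proof of that theorem closely; you even make explicit the coverage condition $\beta(a\mid s) > 0$ whenever $\pi(a\mid s) > 0$ needed for $\rho$ to be well defined and bounded, which the paper leaves implicit. But for the statement you were asked to prove, the entire analytic content --- iterate stability and ODE tracking under Markovian noise --- is precisely what your proposal assumes rather than establishes.
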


\section{Proof of Theorem~\ref{thm:markov-proof}}\label{app:proof:thm:markov-proof}
\begin{proof}
    Note that the update in~(\ref{eq:td-markov}) can be written as,
\begin{align*}
    \theta_{i+1}=\theta_i+\alpha_i h(\theta_i,\tau_{i+1}),
\end{align*}
where $h(\theta,\tau)  := \rho(G- \varphi(s_k)^{\top}\theta)\varphi(s_k)$ and $\tau=(s_k,a_k,s_{k+1},a_{k+1},\cdots,s_{k+n},a_{k+n})$ is the $n$-step state-action pair sequence, and $\rho$ and $G$ are the corresponding importance sampling ratio and the discounted sum of return defined in~\cref{algo:TD1:markov}, respectively.
Meanwhile, the re-scaled map becomes 
\begin{align*}
    h_c(\theta,\tau)
    =& \rho\left(\frac{\sum_{i=0}^{n-1} \gamma^k r_{i+k}}{c}+\gamma^n\varphi(s_{k+n})^{\top}\theta-\varphi(s_k)^{\top}\theta\right)\varphi(s_k),
\end{align*}
and as $\frac{\sum_{k=0}^{n-1} \gamma^k r_{i+k}}{c}\to 0$ as $c\to\infty$, $h_c(\theta,\tau)$ uniformly converges to $ h_{\infty}(\theta,\tau)$ where
\begin{align*}
      h_{\infty}(\theta,\tau)=&\rho (\gamma^n\varphi(s_{k+n})^{\top}\theta-\varphi(s_k)^{\top}\theta)\varphi(s_k).
\end{align*}
Likewise, we have $
    \bar{h}_c(\theta):= \Phi^{\top}D^{\beta}\left(\frac{T^n(\Phi(c\theta))}{c} -\Phi\theta \right).$ As $\tau:=\{(s_k,a_k)\}_{k=0}^{n}$ is sampled from its stationary distribution, $(s_0,a_0)$ is sampled from its stationary distribution, which corresponds to the diagonal elements of $D^{\beta}$. Then, $\mathbb{E}\left[ \varphi(s_0)\varphi(s_0) \right]= \Phi^{\top} D^{\beta} \Phi$. Likewise, $\mathbb{P}(s_{n}\mid s_0)$ is the probability distribution that corresponds to $e^{\top}_{s_n}(P^{\pi})^n$. Therefore, we have $\mathbb{E}\left[  \rho \varphi(s_0) \varphi(s_{n})^{\top} \middle| \beta \right]= \Phi^{\top}D^{\beta}(P^{\pi})^n\Phi $.

Moreover, noting that 
\begin{align*}
    \lim_{c\to\infty}\frac{T^n(\Phi(c\theta))}{c}=& \lim_{c\to\infty} \frac{\sum^{n-1}_{i=0}\gamma^i (P^{\pi})^iR^{\pi}+\gamma^n(P^{\pi})^n\Phi(c\theta)}{c} \\
    =&  \gamma^n (P^{\pi})^n \Phi\theta,
\end{align*} 
$\bar{h}_c(\theta)$ converges to $h_{\infty}(\theta)$ uniformly in $\theta$ where 
\begin{align*}
     \bar{h}_{\infty}(\theta):=& \Phi^{\top}D^{\beta}(\gamma^n(P^{\pi})^n\Phi\theta-\Phi\theta).
\end{align*}
Now, let us check the conditions in~\cref{assmp:borkar-meyn-markov} to apply~\cref{lem:borkar-markov}. Item (1) has been addressed above. Since we consider a linear mapping, the conditions in items (2) and (3) of~\cref{assmp:borkar-meyn-markov} are naturally satisfied. Items (4) and (5) come from~\cref{thm:nd}. Items (6) and (7) follow from the assumption on the step-size condition and the fact that $\{\tau_i\}_{i\in\mathbb{N}}$ forms an irreducible Markov chain.
\end{proof}

\section{Experiments}

This section presents experimental results that validate the theoretical findings. We use a MDP with two states ($|\mathcal{S}|=2$), two actions ($|\mathcal{A}|=2$), and a discount factor of $\gamma = 0.99$. The reward is set as\textbf{} zero. The behavior policy, $\beta$, selects actions with probabilities $\beta(1\mid s)=0.76$ and $\beta(2\mid s)=0.24$, while the target policy, $\pi$, uses probabilities $\pi(1\mid s)=0.3$ and $\pi(2\mid s)=0.7$ for all states $s\in\mathcal{S}$. The feature matrix and the transition matrix are as follows:
\begin{align*}
    \Phi = \begin{bmatrix}
        1.78\\
        1.2
    \end{bmatrix},\quad P =\begin{bmatrix}
        0.58 & 0.42\\
        0.92 & 0.08 \\
        0.47 & 0.53\\
        0.6 & 0.4
    \end{bmatrix}
\end{align*}
The matrix $\Phi^{\top}D^{\beta}(\gamma^n(P^{\pi})^n-I)\Phi$ becomes a Hurwitz matrix when $n\geq 3$, but not when $n$ is 1 or 2. This is illustrated in Figure~\ref{fig:exp}, which shows that the algorithm diverges for $n\in\{1,2\}$ and converges for $n\in\{3,4\}$.

Meanwhile, the values of the upper bound for $n_1^*,n_2^*$ and $n_3^*$ in Lemma~\ref{lem:A_Schur} and Theorem~\ref{thm:nd} are $11,11,54$, respectively. The minimum values for the matrix $A$ to be Schur is $3$, $\Pi T^n$ to be a contraction is 5, and $\Phi^{\top}D^{\beta}(\gamma^n(P^{\pi})^n-I)\Phi$ to be a Hurwitz matrix is 3. Even though the bounds are not tight, the bound only scales logarithmically to important key factors.

\begin{figure}[ht]
  \centering
  \includegraphics[width=0.41\textwidth]{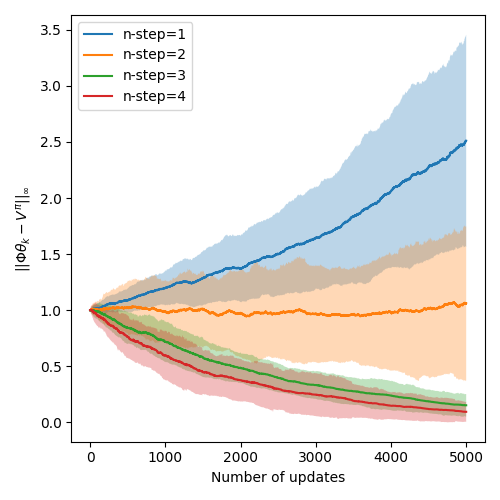}
  \caption{Due to high variance, we clipped the importance sampling ratio. Instead of $\rho$, we used $\min\{\rho,9\}$.}
  \label{fig:exp}
\end{figure}

\section{Examples for Theorem 4}\label{sec:example-theorem4}

In this section, we provide an example where $\Phi^{\top}D^{\beta}(\gamma P^{\pi}-I)\Phi$ is Hurwitz but $\Phi^{\top}D^{\beta}(\gamma^2(P^{\pi})^2-I)\Phi$ is not. Consider a MDP with $\beta(1\mid s)=0.8,\beta(2\mid s)=0.2$ and $\pi(1\mid s)=0.3,\pi (2\mid s)=0.7$ for $s\in\mathcal{S}$, $\gamma=0.99$ and 
\begin{align*}
    \Phi = \begin{bmatrix}
        2 \\
        1\\
        2
    \end{bmatrix},\quad P = \begin{bmatrix}
        0.1 & 0.8 & 0.1\\
        0 & 0.5 & 0.5 \\
        0 & 0 & 1 \\
        0 & 0 & 1\\
        0.8 & 0 & 0.2\\
        0 & 0 & 1
    \end{bmatrix}
\end{align*}
Then, one can see that $\Phi^{\top}D^{\beta}(\gamma P^{\pi}-I)\Phi\approx -0.17$ whereas $\Phi^{\top}D^{\beta}(\gamma^2 (P^{\pi})^2 -I)\Phi\approx 0.02$.
\section{Example for Remark~\ref{remark1}}\label{example:compare}
Consider the following MDP where $\beta(1\mid s)=0.6 ,\;\beta(2\mid s)=0.4$ and $\pi(1\mid s)=0.7, \; \pi(2 \mid s)=0.3$ for $s\in\mathcal{S}$. The discount factor $\gamma$ is 0.99. The feature matrix and transition matrix are 
\begin{align*}
    \Phi = \begin{bmatrix}
            1\\
            -2
    \end{bmatrix}, \quad P = \begin{bmatrix}
        0.3 & 0.7\\
        0 & 1\\
        0.9 & 0.1\\
        0.8 & 0.2
    \end{bmatrix}.
\end{align*}Then, we can see that $\frac{\ln\left( \frac{d_{\min}\lambda_{\min}(\Phi^{\top}\Phi)}{\phi_{\max}^2} \right)}{\ln \gamma}\approx 48 $ while $\frac{\ln\left(  \frac{d_{\min}\lambda_{\min}(\Phi^{\top}\Phi)}{{{d_{\max }}{\lambda _{\max }}({\Phi ^{\top}}\Phi )}}\frac{1}{{\sqrt {|\mathcal{S}|} }}\right)}{\ln \gamma}\approx 37 $. The example in Section~\ref{sec:example-theorem4} provides the other case.

\subsection{Convergence of $n$-step TD and $n$-step PVI}
\begin{example}[MDP2]
    
We consider a discounted MDP with state space $|\mathcal{S}|=4$,
action space and $|\mathcal{A}|=3$, and discount factor $\gamma=0.99$. In this MDP instance, for $n=2$ we observe that $n$-step TD does not converge,
whereas $n$-step approximate value iteration (AVI) converges.  The feature dimension is two, given by
\[
\Phi
=
\begin{bmatrix}
-0.25 & -1.20\\
\phantom{-}0.08 & -0.83\\
-0.05 & -1.04\\
\phantom{-}0.01 & -1.17
\end{bmatrix}.
\]

\textbf{Target and behavior policies.}
The target policy $\pi$ and behavior policy $\beta$ are state-dependent distributions over $\mathcal A$:
\[
\pi(\cdot\mid s)=
\begin{bmatrix}
0.51 & 0.47 & 0.02\\
0.05 & 0.47 & 0.48\\
0.79 & 0.16 & 0.05\\
0.02 & 0.13 & 0.85
\end{bmatrix},
\quad
\beta(\cdot\mid s)=
\begin{bmatrix}
0.20 & 0.04 & 0.76\\
0.41 & 0.52 & 0.07\\
0.87 & 0.08 & 0.05\\
0.03 & 0.95 & 0.02
\end{bmatrix}.
\]

\textbf{Transition kernel.}
Let $P\in\mathbb R^{|\mathcal S||\mathcal A|\times |\mathcal S|}$ denote the transition matrix
whose row indexed by $(s,a)$ specifies $P(\cdot\mid s,a)$.
With the row ordering $(s,a)=(1,1),(1,2),(1,3),(2,1),\dots,(4,3)$, we set
\[
P=
\begin{bmatrix}
0.19 & 0.04 & 0.04 & 0.73\\
0.90 & 0.03 & 0.05 & 0.02\\
0.06 & 0.86 & 0.05 & 0.03\\
0.08 & 0.51 & 0.32 & 0.09\\
0.03 & 0.04 & 0.24 & 0.69\\
0.03 & 0.04 & 0.11 & 0.82\\
0.03 & 0.03 & 0.91 & 0.03\\
0.14 & 0.06 & 0.15 & 0.65\\
0.72 & 0.23 & 0.02 & 0.03\\
0.14 & 0.29 & 0.48 & 0.09\\
0.07 & 0.22 & 0.14 & 0.57\\
0.32 & 0.03 & 0.36 & 0.29
\end{bmatrix}.
\]

\textbf{Reward matrix.}
We use a state-to-next-state reward matrix $R\in\mathbb R^{|\mathcal S|\times|\mathcal S|}$,
where the realized reward upon transitioning $s\to s'$ is $r(s,s')=R_{s,s'}$:
\[
R=
\begin{bmatrix}
\phantom{-}0.95 & \phantom{-}0.64 & \phantom{-}0.26 & -0.60\\
-0.69 & \phantom{-}0.33 & \phantom{-}0.60 & \phantom{-}0.16\\
\phantom{-}0.37 & -0.02 & \phantom{-}0.80 & \phantom{-}0.37\\
-0.22 & \phantom{-}0.82 & \phantom{-}0.49 & -0.24
\end{bmatrix}.
\]

\end{example}

\begin{example}[MDP3]

We consider a discounted MDP with state space $|\mathcal{S}|=4$,
action space $|\mathcal{A}|=2$, and discount factor $\gamma=0.99$.
In this MDP instance, for $n=3$ we observe that $n$-step AVI does not converge,
whereas $n$-step TD converges.
The feature dimension is two, given by
\[
\Phi
=
\begin{bmatrix}
\phantom{-}0.27 & -0.30\\
\phantom{-}0.00 & \phantom{-}0.11\\
-0.85 & -0.01\\
-0.15 & \phantom{-}0.49
\end{bmatrix}.
\]

\textbf{Target and behavior policies.}
The target policy $\pi$ and behavior policy $\beta$ are state-dependent distributions over $\mathcal A$:
\[
\pi(\cdot\mid s)=
\begin{bmatrix}
0.99 & 0.01\\
0.45 & 0.55\\
0.25 & 0.75\\
0.02 & 0.98
\end{bmatrix},
\quad
\beta(\cdot\mid s)=
\begin{bmatrix}
0.01 & 0.99\\
0.01 & 0.99\\
0.04 & 0.96\\
0.99 & 0.01
\end{bmatrix}.
\]

\textbf{Transition kernel.}
Let $P\in\mathbb R^{|\mathcal S||\mathcal A|\times |\mathcal S|}$ denote the transition matrix
whose row indexed by $(s,a)$ specifies $P(\cdot\mid s,a)$.
With the row ordering $(s,a)=(1,1),(1,2),(2,1),(2,2),(3,1),(3,2),(4,1),(4,2)$, we set
\[
\scriptsize
P=
\begin{bmatrix}
0.11 & 0.01 & 0.22 & 0.66\\
0.97 & 0.02 & 0.00 & 0.01\\
0.07 & 0.16 & 0.02 & 0.75\\
0.02 & 0.02 & 0.02 & 0.94\\
0.01 & 0.72 & 0.03 & 0.24\\
0.06 & 0.01 & 0.69 & 0.24\\
0.97 & 0.01 & 0.00 & 0.02\\
0.27 & 0.01 & 0.57 & 0.15
\end{bmatrix}.
\]

\textbf{Reward matrix.}
We use a state-to-next-state reward matrix $R\in\mathbb R^{|\mathcal S|\times|\mathcal S|}$,
where the realized reward upon transitioning $s\to s'$ is $r(s,s')=R_{s,s'}$:
\[
R=
\begin{bmatrix}
-0.33 & \phantom{-}0.66 & -0.34 & \phantom{-}0.88\\
\phantom{-}0.11 & -0.44 & -0.86 & \phantom{-}0.58\\
\phantom{-}0.26 & -0.72 & -0.57 & \phantom{-}0.13\\
-0.36 & -0.37 & \phantom{-}0.20 & -0.42
\end{bmatrix}.
\]

\end{example}

    \begin{figure}
        \centering
        \begin{subfigure}[t]{0.34\textwidth}
            \centering
            \includegraphics[width=\linewidth]{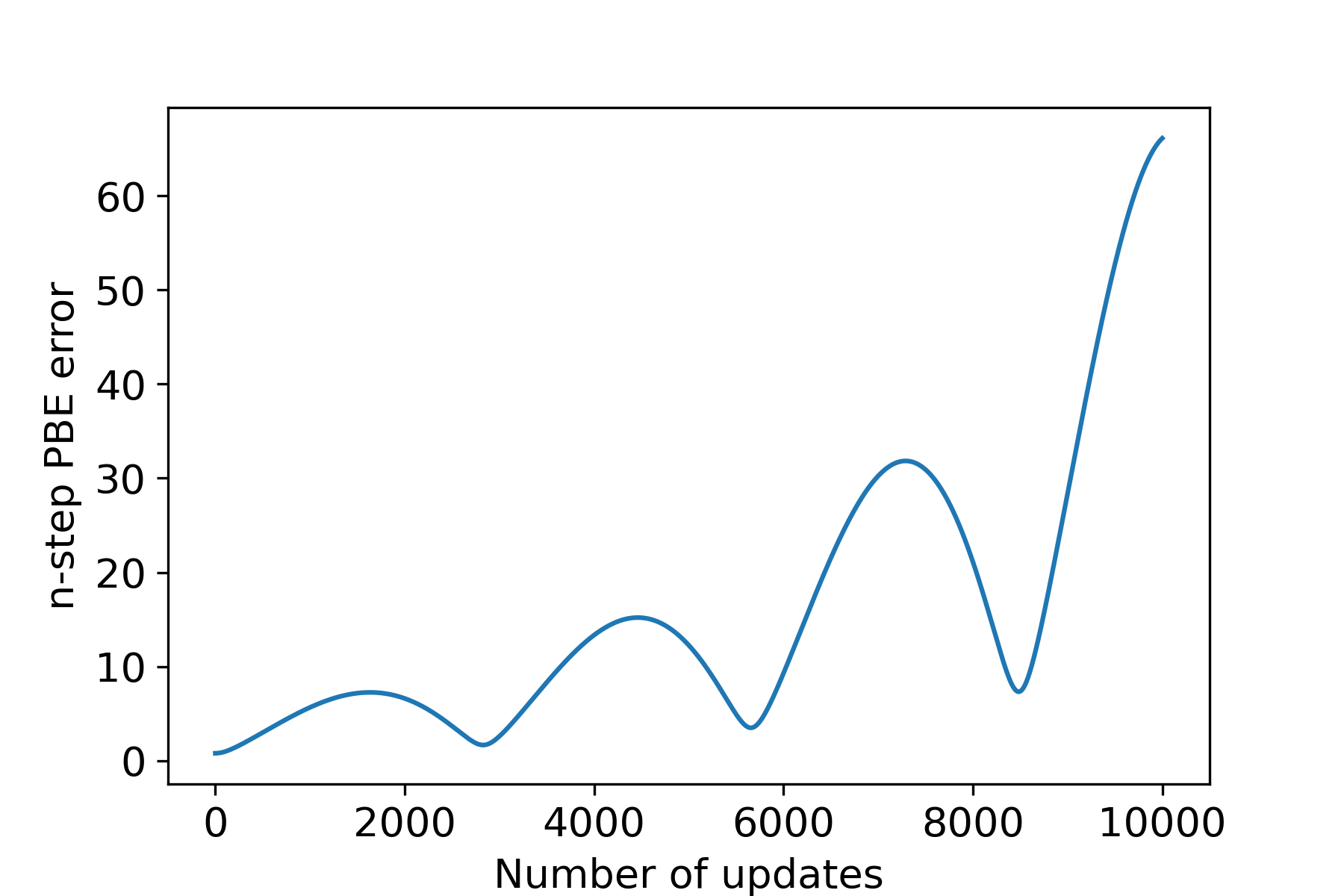}
            \caption{ $n$-step TD}
        \end{subfigure}
        \hspace{0.04\textwidth}
        \begin{subfigure}[t]{0.34\textwidth}
            \centering
            \includegraphics[width=\linewidth]{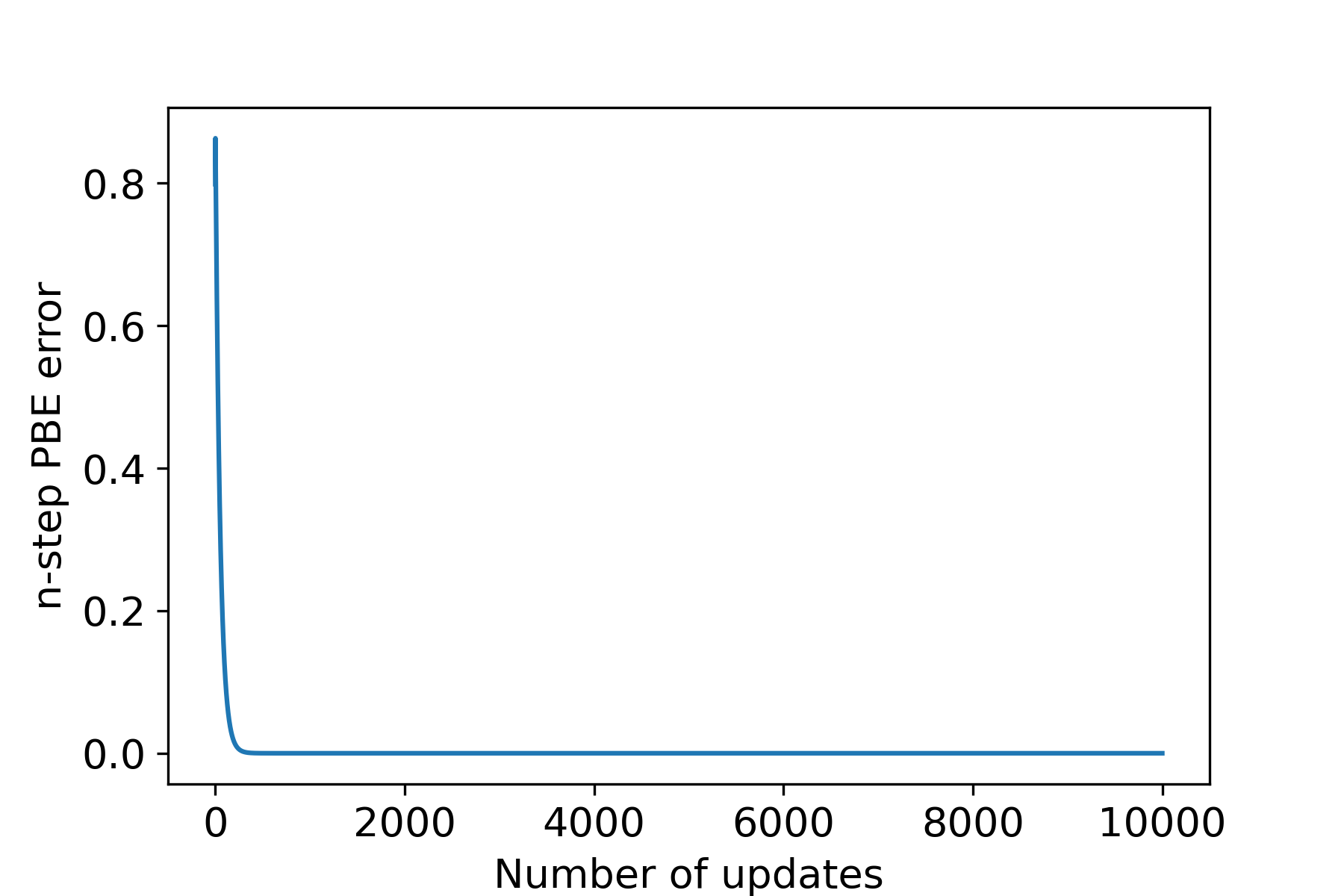}
            \caption{ $n$-step PVI}
        \end{subfigure}
        \begin{subfigure}[t]{0.34\textwidth}
            \centering
            \includegraphics[width=\linewidth]{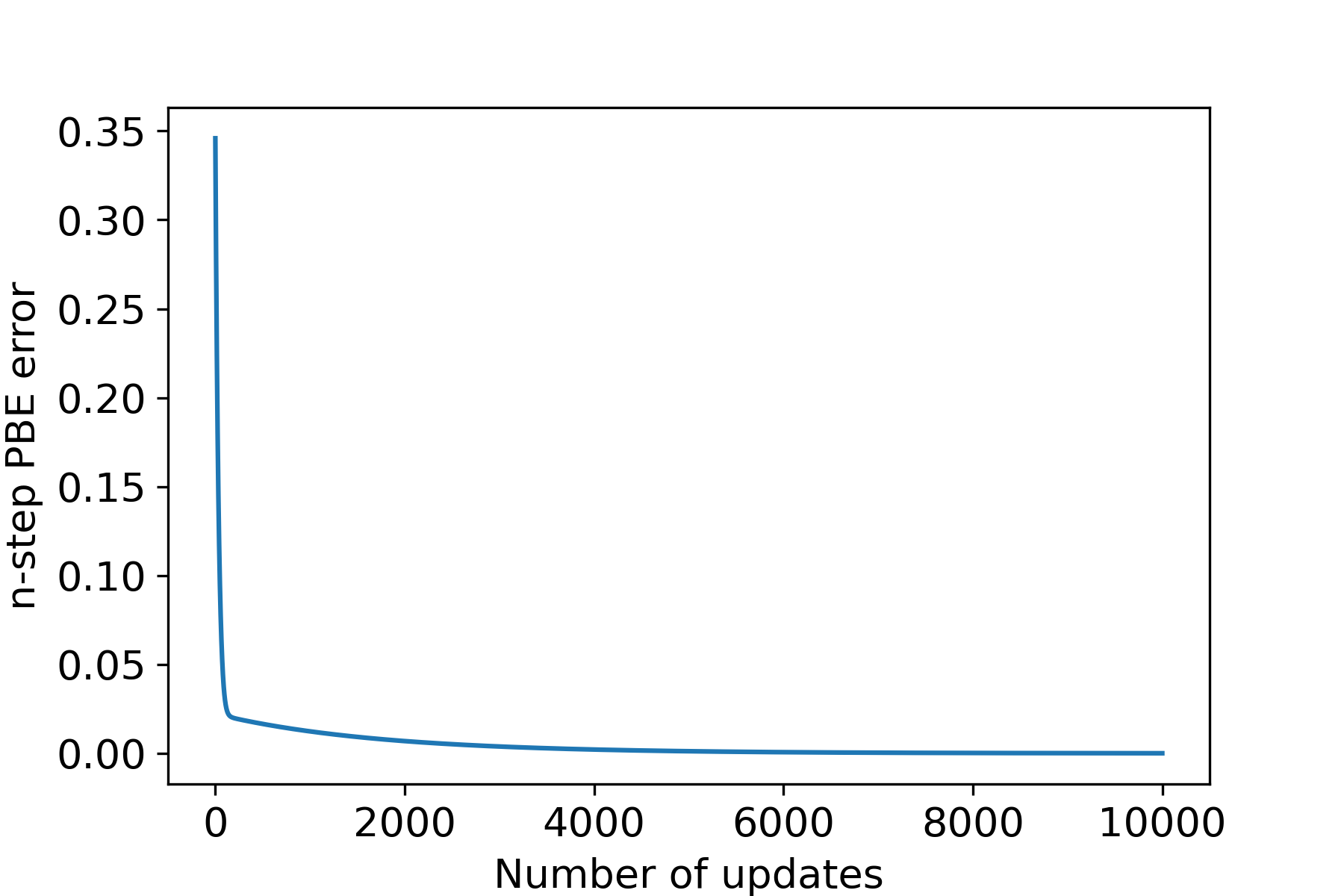}
            \caption{$n$-step TD}
        \end{subfigure}
        \hspace{0.04\textwidth}
        \begin{subfigure}[t]{0.34\textwidth}
            \centering
            \includegraphics[width=\linewidth]{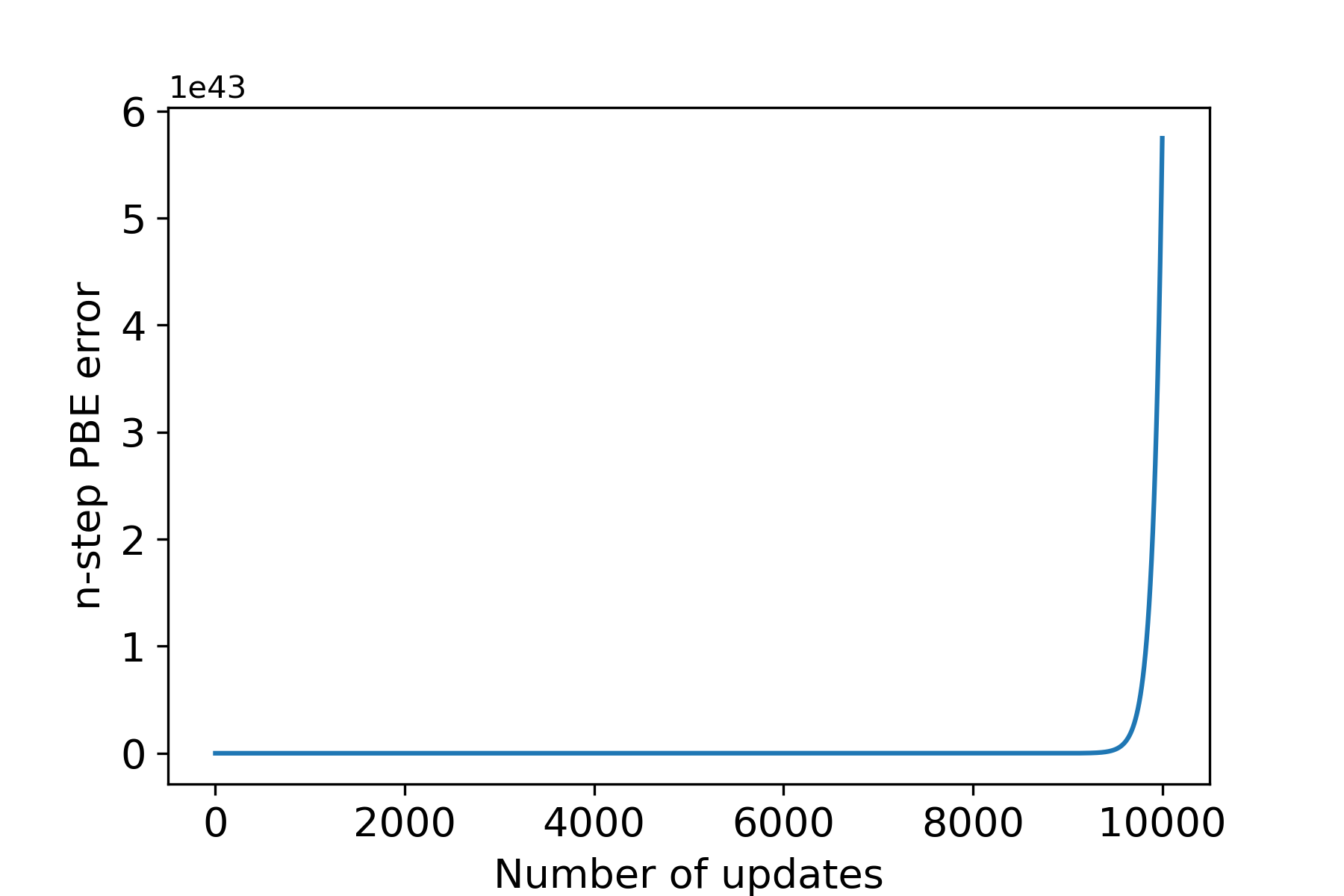}
            \caption{$n$-step PVI}
        \end{subfigure}

    \end{figure}

\end{document}